\newcommand{\name}{SnapBoost\xspace}
\DeclareMathOperator*{\argmin}{arg\,min}
\DeclareMathOperator*{\argmax}{arg\,max}
\theoremstyle{plain}
\newtheorem{theorem}{Theorem}
\newtheorem{lemma}[theorem]{Lemma}
\newtheorem{assumption}{Assumption}
\theoremstyle{definition}
\newtheorem{definition}{Definition}
\algnewcommand\algorithmicparfor{\textbf{parfor}}
\algnewcommand\algorithmicpardo{\textbf{do}}
\algnewcommand\algorithmicendparfor{\textbf{end\ parfor}}
\title{\name{}: A Heterogeneous Boosting Machine}
\author{
    Thomas Parnell$^{*}$ \\
    IBM Research -- Z\"urich \\
    Z\"urich, Switzerland \\
    \texttt{tpa@zurich.ibm.com}
    \And
    Andreea Anghel\thanks{equal contribution.} \\
    IBM Research -- Z\"urich \\
    Z\"urich, Switzerland \\
    \texttt{aan@zurich.ibm.com}
    \AND
    Ma{{\l}}gorzata {{\L}}azuka \\
    ETH Z\"urich \\
    Z\"urich, Switzerland \\
    \texttt{lazukam@student.ethz.ch}
    \And
    Nikolas Ioannou \\
    IBM Research -- Z\"urich \\
    Z\"urich, Switzerland \\
    \texttt{nio@zurich.ibm.com} 
    \And
    Sebastian Kurella \\
    ETH Z\"urich \\
    Z\"urich, Switzerland \\
    \texttt{kurellas@student.ethz.ch} 
    \AND
    Peshal Agarwal \\
    ETH Z\"urich \\
    Z\"urich, Switzerland \\
    \texttt{agarwalp@student.ethz.ch}
    \And
    Nikolaos Papandreou \\
    IBM Research -- Z\"urich \\
    Z\"urich, Switzerland \\
    \texttt{npo@zurich.ibm.com}
    \And
    Haralampos Pozidis \\
    IBM Research -- Z\"urich \\
    Z\"urich, Switzerland \\
    \texttt{hap@zurich.ibm.com} 
}
\begin{document}

\maketitle

\begin{abstract}

  Modern gradient boosting software frameworks, such as XGBoost and LightGBM, implement Newton descent in a functional space. 
  At each boosting iteration, their goal is to find the base hypothesis, selected from some base hypothesis class, that is closest to the Newton descent direction in a Euclidean sense.
  Typically, the base hypothesis class is fixed to be all binary decision trees up to a given depth.
  In this work, we study a Heterogeneous Newton Boosting Machine (HNBM) in which the base hypothesis class may vary across boosting iterations.
  Specifically, at each boosting iteration, the base hypothesis class is chosen, from a fixed set of subclasses, by sampling from a probability distribution.
  We derive a global linear convergence rate for the HNBM under certain assumptions, and show that it agrees with existing rates for Newton's method when the Newton direction can be perfectly fitted by the base hypothesis at each boosting iteration.
  We then describe a particular realization of a HNBM, \name{}, that, at each boosting iteration, randomly selects between either a decision tree of variable depth or a linear regressor with random Fourier features.
  We describe how \name{} is implemented, with a focus on the training complexity.
  Finally, we present experimental results, using OpenML and Kaggle datasets, that show that \name{} is able to achieve better generalization loss than competing boosting frameworks, without taking significantly longer to tune.
  
\end{abstract}


\section{Introduction}
\label{sec:introduction}

Boosted ensembles of decision trees are the dominant machine learning (ML) technique today in application domains
where tabular data is abundant (e.g., competitive data science, financial/retail industries). 
While these methods achieve best-in-class generalization, they also expose a large number of hyper-parameters.
The fast training routines offered by modern boosting frameworks allow one to effectively tune these hyper-parameters and are an equally important factor in their success.

The idea of boosting, or building a strong learner from a sequence of weak learners, originated in the early 1990s \cite{schapire1990strength}, \cite{freund1995boosting}.
This discovery led to the widely-popular AdaBoost algorithm~\cite{freund1995decision}, which iteratively trains a sequence of weak learners, whereby the training examples for the next learner are weighted according to the success of the previously-constructed learners. 
An alternative theoretical interpretation of AdaBoost was presented in~\cite{friedman2001greedy}, which showed that the algorithm is equivalent to minimizing an exponential loss function using gradient descent in a functional space. Moreover, the same paper showed that this idea can be applied to arbitrary differentiable loss functions.

The modern explosion of boosting can be attributed primarily to the rise of two software frameworks: XGBoost \cite{KDD16_xgboost} and LightGBM \cite{NIPS17_Ke}. 
Both frameworks leverage the formulation of boosting as a functional gradient descent, to support a wide range of different loss functions, resulting in general-purpose ML solutions that can be applied to a wide range of problems.
Furthermore, these frameworks place a high importance on training performance: employing a range of algorithmic optimizations to reduce complexity (e.g., splitting nodes using histogram summary statistics) as well as system-level optimizations to leverage both many-core CPUs and GPUs.
One additional characteristic of these frameworks is that they use a second-order approximation of the loss function and perform an algorithm akin to Newton's method for optimization.
While this difference with traditional gradient boosting is often glossed over, in practice it is found to significantly improve generalization \cite{sigrist2018gradient}. 

From a theoretical perspective, boosting algorithms are not restricted to any particular class of weak learners. 
At each boosting iteration, a weak learner (from this point forward referred to as a \textit{base hypothesis}) is chosen from some base hypothesis class.
In both of the aforementioned frameworks, this class comprises all binary decision trees up to a fixed maximum depth.
Moreover, both frameworks are \textit{homogeneous}: the hypothesis class is fixed at each boosting iteration.
Recently~\cite{cortes2014deep, sigrist2019ktboost, cortes2019regularized} have considered \textit{heterogeneous} boosting, in which the hypothesis class may vary across boosting iterations. Promising results indicate that this approach may improve the generalization capability of the resulting ensembles, at the expense of significantly more complex training procedures. 

The goal of our work is to build upon the ideas of \cite{cortes2014deep, sigrist2019ktboost, cortes2019regularized}, and develop a heterogeneous boosting framework with theoretical convergence guarantees, that can achieve better generalization than both XGBoost and LightGBM, without significantly sacrificing performance.

\paragraph{Contributions.}
The contributions of this work can be summarized as follows:

\begin{itemize}[leftmargin=0.25in]
	
\item We propose a Heterogeneous Newton Boosting Machine (HNBM), in which the base hypothesis class at each boosting iteration is selected at random, from a fixed set of subclasses, according to an arbitrary probability mass function $\Phi$.
\item We derive a global linear convergence rate for the proposed HNBM for strongly convex loss functions with Lipschitz-continuous gradients. 
      Our convergence rates agree with existing global rates in the special case when the base hypotheses are fully dense in the prediction space.
\item We describe a particular realization of a HNBM, \name{}, that randomly selects between $K$ different subclasses at each boosting iteration: $(K-1)$ of these subclasses correspond to binary decision trees (BDTs) of different maximum depths and one subclass corresponds to linear regressors with random Fourier features (LRFs).
	  We provide details regarding how \name{} is implemented, with a focus on training complexity.
\item We present experiments using OpenML~\cite{OpenML2013} and Kaggle~\cite{kaggle} datasets that demonstrate \name{} generalizes better than competing boosting frameworks, without compromising performance.
\end{itemize}

\subsection{Related Work}
\label{sec:related}
\textbf{Heterogeneous Boosting.}
In \cite{sigrist2019ktboost}, the author proposes a heterogeneous boosting algorithm, KTBoost, that learns both a binary decision tree (BDT) and a kernel ridge regressor (KRR) at each boosting iteration, and selects the one that minimizes the training loss.
It is argued that by combining tree and kernel regressors, such an ensemble is capable of approximating a wider range of functions than trees alone.
While this approach shows promising experimental results, and was a major source of inspiration for our work, the complexity of the training procedure does not scale: one must learn multiple base hypotheses at every iteration. 
In \cite{cortes2014deep}, the authors derive generalization bounds for heterogeneous ensembles and propose a boosting algorithm, DeepBoost, that chooses the base hypothesis at each boosting iteration by explicitly trying to minimize said bounds. 
The authors acknowledge that exploring the entire hypothesis space is computationally infeasible, and propose a greedy approach that is specific to decision trees of increasing depth. 


\textbf{Randomized Boosting.} 
Stochastic behavior in boosting algorithms has a well-established history \cite{friedman2002stochastic}, and it is common practice today to learn each base hypothesis using a random subset of the features and/or training examples.
Recently, a number of works have introduced additional stochasticity, in particular when selecting the base hypothesis class at each boosting iteration.
In \cite{lu2018randomized}, a randomized gradient boosting machine was proposed that selects, at each boosting iteration, a subset of base hypotheses according to some uniform selection rule. 
The HNBM proposed in our paper can be viewed as a generalization of this approach to include (a) arbitrary non-uniform sampling of the hypothesis space and (b) second-order information. 
A form of non-uniform sampling of the hypothesis space was also considered in \cite{cortes2019regularized}, however second-order information was absent.
 
\textbf{Ordered Boosting.} 
An orthogonal research direction tries to improve the generalization capability of boosting machines by changing the training algorithm to avoid \textit{target leakage}.
CatBoost~\cite{prokhorenkova2018catboost} implements this idea, together with encoding of categorical variables using ordered target statistics, oblivious decision trees, as well as minimal variance example sampling~\cite{NIPS2019_9645}.

\textbf{Deep-learning-based Approaches.} 
\cite{popov2019neural, arik2019tabnet, ke2019deepgbm} have introduced differentiable architectures that are in some sense analogous to boosting machines. 
Rather than using functional gradient descent, these models are trained using end-to-end back-propagation and implemented in automatic differentiation frameworks, e.g., TensorFlow, PyTorch. 
While the experimental results are promising, a major concern with this approach is the comparative training and/or tuning time, typically absent from the papers. 
We compare the methods of our paper with one such approach in Appendix~\ref{sec:node-mixboost} and find that the deep learning-based approach is 1-2 orders of magnitude slower in terms of tuning time.


\section{Heterogeneous Newton Boosting}
\label{sec:algorithm}

In this section we introduce heterogeneous Newton boosting and derive theoretical guarantees on its convergence under certain assumptions.

\subsection{Preliminaries}

We are given a matrix of feature vectors $X\in\mathbb{R}^{n\times d}$ and a vector of training labels $y\in\mathcal{Y}^n$, 
where $n$ is the number of training examples and $d$ is the number of features. 
The $i$-th training example is denoted $x_i^T\in\mathbb{R}^{d}$.
We consider an optimization problem of the form:
\begin{equation}
	\min_{f\in\mathcal{F}} \sum_{i=1}^n l(y_i, f(x_i)),\label{eq:obj}
\end{equation}
where loss function $l:\mathcal{Y}\times\mathbb{R}\rightarrow\mathbb{R^+}$, and $\mathcal{F}$ is a particular class of functions to be defined in the next section. 
We assume that the loss function $l(y,f)$ is twice differentiable with respect to $f$, $l'(y,f)$ and $l''(y,f)$ denote the first and second derivative respectively,
and satisfies the following assumptions:
\begin{assumption}[$\mu$-strongly convex loss] There exists a constant $\mu>0$ such that $\forall y,f_1,f_2$:
	\label{assumption:convex}
	\begin{equation*}
		l(y, f_1) \geq l(y, f_2) + l'(y,f_2)(f_1-f_2) + \frac{\mu}{2}(f_1-f_2)^2 \iff l''(y,f)\geq \mu 
	\end{equation*}
\end{assumption}
\begin{assumption}[$S$-Lipschitz gradients] There exists a constant $S>0$ such that $\forall y,f_1,f_2$:
	\label{assumption:lipschitz}
	\begin{equation*}
	\left|l'(y,f_1) - l'(y,f_2)\right| \leq S|f_1-f_2|\iff l''(y,f) \leq S
	\end{equation*}
\end{assumption}
Examples of loss functions that satisfy the above criteria are the standard least-squares loss: $l(y,f)=\frac{1}{2}(y-f)^2$, as well as L2-regularized logistic loss: $l(y,f)= \log(1 + \exp(-yf))+\frac{\lambda}{2}f^2$ for $\lambda>0$.

\subsection{Heterogeneous Newton Boosting}

We consider a heterogeneous boosting machine in which the base hypothesis at each boosting iteration can be drawn from one of $K$ distinct subclasses.
Let $\mathcal{H}^{(k)}$ denote the $k$-th subclass for $k\in[K]$ which satisfies the following assumption:
\begin{assumption}[Subclass structure] For $k\in[K]$
	\label{assumption:norm}
	\begin{equation*}
		\mathcal{H}^{(k)} = \left\{\sigma b(x) : b(x)\in\mathcal{\bar{H}}^{(k)}, \sigma\in\mathbb{R}\right\},
	\end{equation*}
	where $\mathcal{\bar{H}}^{(k)}$ is a finite class of functions $b:\mathbb{R}^d\rightarrow\mathbb{R}$ that satisfy $\sum_{i=1}^n b(x_i)^2 = 1$.
\end{assumption}
We note that while the subclasses used in practice (e.g., trees) may well be infinite beyond a simple scaling factor, in practice they are finite when represented in floating point arithmetic. 
We now consider the optimization problem (\ref{eq:obj}) over the domain:
\begin{equation}
	\mathcal{F} = \left\{ \sum_{m=1}^{M} \alpha_m b_{m}(x) : \alpha_m\in\mathbb{R},  b_{m}\in \left\{\mathcal{H}^{(1)} \cup \mathcal{H}^{(2)} \ldots \cup \mathcal{H}^{(K)} \right\}\right\}.\label{eq:domain}
\end{equation}

Our proposed method for solving this optimization problem is presented in full in Algorithm \ref{alg:hnbm}. 
At each boosting iteration, we randomly sample one of the $K$ subclasses according to a given probability mass function (PMF) $\Phi$.
The probability that the $k$-th subclass is selected is denoted $\phi_k$.
Let $u_m\in[K]$ denote the sampled subclass index at boosting iteration $m$.
The base hypothesis to insert at the $m$-th boosting iteration is determined as follows:
\begin{equation}
	b_m = \argmin_{b\in \mathcal{H}^{(u_m)}}\left[\sum_{i=1}^n l(y_i, f^{m-1}(x_i) + b(x_i))\right] 
	\approx \argmin_{b\in \mathcal{H}^{(u_m)}}\left[\sum_{i=1}^n h_i\left(-g_i/h_i - b(x_i)\right)^2\right],\label{eq:regression}
\end{equation}
where the approximation is obtained by taking the second-order Taylor expansion of $l(y_i, f^{m-1}(x_i) + b(x_i))$ around $f^{m-1}(x_i)$ and 
the expansion coefficients are given by $g_i = l'(y_i, f^{m-1}(x_i))$ and $h_i = l''(y_i,f^{m-1}(x_i))$.
In practice, an L2-regularization penalty, specific to the structure of the subclass, may also be applied to \eqref{eq:regression}.
It should be noted that (\ref{eq:regression}) corresponds to a standard sample-weighted least-squares minimization which, depending on the choice of subclasses,
enables one to reuse a plethora of existing learning algorithms and implementations 
\footnote{The supplemental material contains exemplary code for Algorithm \ref{alg:hnbm} that uses generic scikit-learn regressors.}.
Intuitively, the algorithm chooses the hypothesis from the randomly selected subclass $\mathcal{H}^{(u_m)}$ that is closest (in a Euclidean sense) to the Newton descent direction, 
and dimensions with larger curvature are weighted accordingly.
To ensure global convergence, the model is updated by applying a learning rate $\epsilon>0$:
\begin{equation*}
	f^{m}(x) = f^{m-1}(x) + \epsilon b_{m}(x).
\end{equation*}
In practice, $\epsilon$ is normally treated as a hyper-parameter and tuned using cross-validation, although some theoretical insight on how it should be set to ensure convergence is provided later in the section.

\begin{algorithm}[t]
\begin{algorithmic}[1]
\State \textbf{initialize: } $f^0(x) = 0$
\For {$m=1,\ldots,M$}
	\State Compute vectors $\left[g_i\right]_{i=1,\ldots,n}$ and $\left[h_i\right]_{i=1,\ldots,n}$
	\State Sample subclass index $u_m\in\{1,2,\ldots,K\}$ according to probability mass function $\Phi$
	\State Fit base hypothesis: $b_m = \argmin_{b \in \mathcal{H}^{(u_m)}} \sum_{i=1}^n h_i\left(-g_i/h_i - b(x_i)\right)^2$ 
	\State Update model: $f^{m}(x) = f^{m-1}(x) + \epsilon b_{m}(x)$
\EndFor
\State \textbf{output: } $f^M(x)$
\end{algorithmic}
\caption{Heterogeneous Newton Boosting Machine}
\label{alg:hnbm}
\end{algorithm}

\subsection{Reformulation as Coordinate Descent}

While boosting machines are typically implemented as formulated above, they are somewhat easier to analyze theoretically when viewed instead as a coordinate descent in a very high dimensional space \cite{lu2018randomized, cortes2019regularized}. 
Let $\mathcal{\bar{H}}=\mathcal{\bar{H}}^{(1)}\cup\mathcal{\bar{H}}^{(2)}\cup\ldots\cup\mathcal{\bar{H}}^{(K)}$ denote the union of the finite, normalized subclasses defined in Assumption \ref{assumption:norm}.
Furthermore, let $b_j\in\mathcal{\bar{H}}$ denote an enumeration of the hypotheses for $j\in[|\mathcal{\bar{H}}|]$ and $I(k)=\{j : b_j\in\mathcal{\bar{H}}^{(k)}\}$ denote the set of all indices 
corresponding to normalized hypotheses belonging to the $k$-th subclass. 
Let $B\in\mathbb{R}^{n\times|\mathcal{\bar{H}}|}$ be a matrix with entries given by $B_{i,j} = b_j(x_i)$.
Then we can reformulate our optimization problem (\ref{eq:obj}) over domain \eqref{eq:domain} as follows:
\begin{equation*}
	\min_{\beta\in\mathbb{R}^{|\mathcal{\bar{H}}|}}L(\beta) \equiv \min_{\beta\in\mathbb{R}^{|\mathcal{\bar{H}}|}}\sum_{i=1}^n l(y_i, B_i \beta),
\end{equation*}
where $B_i$ denotes the $i$-th row of $B$. In this reformulation, the model at iteration $m$ is given by:
\begin{equation}
	\label{eq:update}
	\beta^{m} = \beta^{m-1} + \epsilon\sigma^*_{j_m}e_{j_m},
\end{equation}
where $e_{j}$ denotes a vector with value $1$ in the $j$-th coordinate and $0$ otherwise, $\sigma_j$ is the descent magnitude, and $j_m$ is the descent coordinate. 
For a given $j$, the magnitude $\sigma^*_{j}$ is given by:
\begin{equation}
	\sigma^*_j = \min_{\sigma}\sum_{i=1}^n h_i\left(-g_i/h_i - \sigma B_{i,j}\right)^2 = -\nabla^2_{j}L(\beta^{m-1})^{-1} \nabla_{j}L(\beta^{m-1}),
	\label{eq:update_magnitude}
\end{equation}
and the descent coordinate is given by:
\begin{equation}
	j_m = \argmin_{j\in I(u_m)}\left[\sum_{i=1}^n h_i\left(-g_i/h_i - \sigma^*_j B_{i,j}\right)^2\right] = \argmax_{j\in I(u_m)}\left[\left|\nabla^2_{j}L(\beta^{m-1})^{-1/2} \nabla_j L(\beta^{m-1})\right|\right].
	\label{eq:update_coord}
\end{equation}
Further details regarding this reformulation are provided in Appendix \ref{app:coord}.

\subsection{Theoretical Guarantees}

In order to establish theoretical guarantees for Algorithm \ref{alg:hnbm}, we adapt the theoretical framework developed in \cite{lu2018randomized} to our setting with non-uniform sampling of the subclasses as well as the second-order information. 
In particular, we derive a convergence rate that depends on the following quantity:
\begin{definition}[Minimum cosine angle] The minimum cosine angle $0\leq\Theta\leq1$ is given by:
	\label{defn:mca}
	 \begin{equation}
		\Theta = \min_{c\in Range(B)} \left\| \left[\cos(B_{.j}, c)\right]_{j=1\ldots,|\mathcal{\bar{H}}|}\right\|_{\Phi},
	\end{equation}
	where $B_{.j}$ denotes the $j$-th column of the matrix $B$ and $\left\|x\right\|_{\Phi} = \sum_{k=1}^K \phi_k \max_{j\in I(k)}|x_j|$.
\end{definition}
The minimum cosine angle measures the expected \textit{density} of base hypotheses in the prediction space. 
A value close to $1$ indicates that the Newton direction can be closely fitted to one of the base hypotheses, and a value close to $0$ the opposite. 

In order to prove global convergence of Algorithm \ref{alg:hnbm}, we will need the following technical lemma:
\begin{lemma}
	Let $\mathbb{E}_m[.]$ denote expectation over the subclass selection at the $m$-th boosting iteration
	and let $\Gamma(\beta) = \left[\nabla^2_{j}L(\beta)^{-1/2} \nabla_j L(\beta)\right]_{j=1\ldots,|\mathcal{\bar{H}}|}$
	then the following inequality holds:
	\label{lemma:exp}
	\begin{equation}
	\mathbb{E}_m\left[\Gamma_{j_m}(\beta^{m-1})^2\right] \geq \left\|\Gamma(\beta^{m-1})\right\|_{\Phi}^2.
	\end{equation}
\end{lemma}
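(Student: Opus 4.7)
The plan is to evaluate the expectation on the left-hand side explicitly using the definition of $j_m$ from equation \eqref{eq:update_coord}, and then reduce the resulting inequality to a standard convexity statement for the probability mass function $\Phi$.

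First I would observe that by \eqref{eq:update_coord}, the descent coordinate $j_m$ maximizes $|\Gamma_j(\beta^{m-1})|$ over $j \in I(u_m)$, so
\begin{equation*}
\Gamma_{j_m}(\beta^{m-1})^2 \;=\; \max_{j \in I(u_m)} \Gamma_j(\beta^{m-1})^2 \;=\; \Bigl(\max_{j \in I(u_m)} |\Gamma_j(\beta^{m-1})|\Bigr)^2.
\end{equation*}
Since $u_m$ is drawn according to $\Phi$ and the above quantity is a deterministic function of $u_m$ (conditional on $\beta^{m-1}$), taking expectation yields
\begin{equation*}
\mathbb{E}_m\bigl[\Gamma_{j_m}(\beta^{m-1})^2\bigr] \;=\; \sum_{k=1}^{K} \phi_k \Bigl(\max_{j \in I(k)} |\Gamma_j(\beta^{m-1})|\Bigr)^2.
\end{equation*}

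Next I would write $a_k := \max_{j \in I(k)} |\Gamma_j(\beta^{m-1})|$, so that by Definition~\ref{defn:mca} (applied to the vector $\Gamma(\beta^{m-1})$) we have $\|\Gamma(\beta^{m-1})\|_\Phi = \sum_{k=1}^{K} \phi_k a_k$. The desired inequality then becomes
\begin{equation*}
\sum_{k=1}^{K} \phi_k a_k^2 \;\geq\; \Bigl(\sum_{k=1}^{K} \phi_k a_k\Bigr)^{\!2},
\end{equation*}
which is exactly Jensen's inequality applied to the convex map $t \mapsto t^2$ under the probability measure $\{\phi_k\}_{k=1}^K$ (equivalently, a one-line Cauchy--Schwarz, using $\sum_k \phi_k = 1$). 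This closes the proof.

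There is no serious obstacle: the argument is essentially a direct unwinding of the definitions of $j_m$ and of $\|\cdot\|_\Phi$, followed by Jensen. The only point requiring any care is making sure the max inside the norm $\|\cdot\|_\Phi$ lines up with the coordinate maximization in \eqref{eq:update_coord}, which it does because both take the max of $|\Gamma_j|$ over the same index set $I(k)$.
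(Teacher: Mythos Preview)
Your proof is correct and essentially identical to the paper's: both compute $\mathbb{E}_m[\Gamma_{j_m}^2] = \sum_k \phi_k \bigl(\max_{j\in I(k)} |\Gamma_j|\bigr)^2$ from the definition of $j_m$, then apply Cauchy--Schwarz/Jensen using $\sum_k \phi_k = 1$. The only cosmetic difference is that the paper re-indexes via auxiliary weights $\lambda_j$ on $j\in[|\bar{\mathcal{H}}|]$ before applying Cauchy--Schwarz, whereas you work directly with the $a_k$'s indexed by $k$; your version is arguably cleaner.
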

The proof is provided in Appendix \ref{app:lemma}.
With this result in hand, one can prove the following global linear convergence rate for Algorithm \ref{alg:hnbm}:
\begin{theorem}
	\label{thm:converge}
	Given learning rate $\epsilon=\mu/S$ then:
	\begin{equation}
	\mathbb{E}\left[L(\beta^M)-L(\beta^*)\right] \leq \left(1-\frac{\mu^2}{S^2}\Theta^2\right)^M\left(L(\beta^0)-L(\beta^*)\right),
	\end{equation} 
	where the expectation is taken over the subclass selection at all boosting iterations.
\end{theorem}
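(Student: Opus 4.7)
The plan is to follow the standard template for linearly convergent randomized coordinate descent: derive a per-iteration descent inequality from $S$-smoothness along the chosen coordinate, take conditional expectation over the subclass draw $u_m$ and apply Lemma~\ref{lemma:exp}, and then close the recursion by lower bounding $\|\Gamma(\beta^{m-1})\|_\Phi^2$ by a constant multiple of the suboptimality $L(\beta^{m-1})-L(\beta^*)$. The last step will naturally split into a minimum cosine angle (MCA) bound based on Definition~\ref{defn:mca} and a Polyak--Łojasiewicz (PL) inequality restricted to $\mathrm{Range}(B)$.

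\paragraph{Descent inequality and conditional expectation.}
Along any coordinate $e_j$ the second directional derivative of $L$ equals $\sum_i l''(y_i,B_i\beta)B_{i,j}^2$, which by Assumption~\ref{assumption:lipschitz} and the unit column norms coming from Assumption~\ref{assumption:norm} lies in $[\mu,S]$. Plugging $\alpha=\epsilon\sigma^*_{j_m}$ into the standard quadratic upper bound $L(\beta^{m-1}+\alpha e_{j_m})\le L(\beta^{m-1})+\alpha\nabla_{j_m}L(\beta^{m-1})+\frac{S}{2}\alpha^2$, using the identities $\sigma^*_{j_m}\nabla_{j_m}L(\beta^{m-1})=-\Gamma_{j_m}(\beta^{m-1})^2$ and $(\sigma^*_{j_m})^2\le \mu^{-1}\Gamma_{j_m}(\beta^{m-1})^2$ (the latter because $\nabla^2_{j_m}L\ge \mu$), and setting $\epsilon=\mu/S$, collapses the two $\Gamma_{j_m}^2$ terms into $L(\beta^m)\le L(\beta^{m-1})-\frac{\mu}{2S}\Gamma_{j_m}(\beta^{m-1})^2$. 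Taking $\mathbb{E}_m[\cdot]$ and invoking Lemma~\ref{lemma:exp} then yields $\mathbb{E}_m[L(\beta^m)]\le L(\beta^{m-1})-\frac{\mu}{2S}\|\Gamma(\beta^{m-1})\|_\Phi^2$.

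\paragraph{MCA bound, restricted PL, and conclusion.}
Let $r\in\mathbb{R}^n$ have entries $r_i=l'(y_i,B_i\beta^{m-1})$ and let $P$ denote orthogonal projection onto $\mathrm{Range}(B)$. Because $\nabla^2_j L\le S$ and $B^TP=B^T$, I get $|\Gamma_j(\beta^{m-1})|\ge S^{-1/2}|B_{.j}^T Pr|=S^{-1/2}\|Pr\|\,|\cos(B_{.j},Pr)|$; pulling the scalar out of the $\Phi$-norm and applying Definition~\ref{defn:mca} with $c=Pr\in\mathrm{Range}(B)$ gives $\|\Gamma(\beta^{m-1})\|_\Phi^2\ge S^{-1}\Theta^2\|Pr\|^2$. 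For the PL piece, $F(f):=\sum_i l(y_i,f_i)$ is $\mu$-strongly convex in $f$, and $\nabla L(\beta^*)=B^T\nabla F(B\beta^*)=0$ forces $\nabla F(B\beta^*)\perp\mathrm{Range}(B)$; applying strong convexity between $f=B\beta^{m-1}$ and $f^*=B\beta^*$, using $f-f^*\in\mathrm{Range}(B)$ to replace $\nabla F(f)$ by its projection $Pr$ in the linear term, and maximizing the resulting quadratic in $\|f-f^*\|$ produces the restricted PL bound $\|Pr\|^2\ge 2\mu(L(\beta^{m-1})-L(\beta^*))$. Chaining the three inequalities gives the one-step contraction $\mathbb{E}_m[L(\beta^m)-L(\beta^*)]\le (1-\mu^2\Theta^2/S^2)(L(\beta^{m-1})-L(\beta^*))$, after which the theorem follows by the tower property and induction on $m$.

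\paragraph{Main obstacle.}
The subtlest step is the MCA bound: $\Theta$ is defined by ranging $c$ only over $\mathrm{Range}(B)$, whereas the quantity $r$ appearing in $\nabla_j L=B_{.j}^T r$ generally lies outside it. The projection identity $B^TP=B^T$ is what lets me swap $r$ for $Pr$ without changing the gradients and without touching the MCA, and it is precisely what keeps $\Theta$ meaningful and strictly positive when $B$ fails to span $\mathbb{R}^n$. Getting the constants to land on $\mu^2/S^2$ rather than some looser ratio also hinges on using the exact Newton magnitude $\sigma^*_{j_m}$ together with $\epsilon=\mu/S$, so that the linear and quadratic contributions in the descent step both scale with $\Gamma_{j_m}^2$.
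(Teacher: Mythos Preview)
Your proposal is correct and follows essentially the same route as the paper: a per-step descent bound yielding $L(\beta^m)\le L(\beta^{m-1})-\tfrac{\mu}{2S}\Gamma_{j_m}^2$ with $\epsilon=\mu/S$, then Lemma~\ref{lemma:exp}, then the chain $\|\Gamma\|_\Phi^2\ge S^{-1}\|\nabla L\|_\Phi^2\ge 2\mu S^{-1}\Theta^2(L-L^*)$, and a telescoping finish. The only cosmetic differences are that the paper obtains the descent inequality via the mean value theorem and the ratio bound $l''(y_i,z_i)\le (S/\mu)h_i$ rather than your direct coordinate smoothness argument, and that the paper outsources your MCA/projection step and restricted PL inequality to Propositions~4.4 and~4.5 of \cite{lu2018randomized}, which you have reproved inline; your handling of the projection onto $\mathrm{Range}(B)$ is exactly what those propositions encode.
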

\begin{proof}
	For clarity, we provide only a sketch of the proof, with the full proof in Appendix \ref{app:theorem}.
	Starting from the coordinate update rule (\ref{eq:update}) we have:
	\begin{equation}
		L(\beta^{m}) = L\left(\beta^{m-1} - \epsilon \sigma_{j_m}^* e_{j_m}\right)
		\leq L(\beta^{m-1}) - \frac{\mu}{2S} \Gamma_{j_m}(\beta^{m-1})^2,
	\end{equation}
	where the inequality is obtained by applying the mean value theorem and using Assumptions \ref{assumption:convex} and \ref{assumption:lipschitz}.
	Now taking expectation over the $m$-th boosting iteration and applying Lemma \ref{lemma:exp} we have:
	\begin{align}
		\mathbb{E}_m\left[L(\beta^{m})\right] \leq L(\beta^{m-1}) - \frac{\mu}{2S} \left\|\Gamma(\beta^{m-1})\right\|_{\Phi}^2
		\leq  L(\beta^{m-1}) - \frac{\mu}{2S^2} \left\|\nabla L(\beta^{m-1})\right\|_{\Phi}^2,
		\label{eq:upper}
	\end{align}
	where the second inequality is due to Assumptions \ref{assumption:lipschitz} and \ref{assumption:norm}.
	We then leverage Assumption \ref{assumption:convex} together with Proposition 4.4 and Proposition 4.5 from \cite{lu2018randomized} to obtain the following lower bound:
	\begin{equation}
		\left\|\nabla L(\beta^{m-1})\right\|_{\Phi}^2 \geq 2\mu\Theta^2\left(L(\beta^{m-1})-L(\beta^*)\right)
		\label{eq:lower}
	\end{equation}
	Then, by subtracting $L(\beta^*)$ from both sides of (\ref{eq:upper}), plugging in (\ref{eq:lower}), and following a telescopic argument, the desired result is obtained.
\end{proof}
We note that in the case where $\Theta=1$ (i.e., implying there always exists a base hypothesis that perfectly fits the Newton descent vector) the rate above is equivalent to that derived in \cite{karimireddy2018global} for Newton's method under the same assumptions.


\section{\name: A Heterogeneous Newton Boosting Machine}
\label{sec:implementation}

In this section, we describe \name{}, a realization of a HNBM that admits a low-complexity implementation. 
\name{} is implemented in C++ and uses OpenMP for parallelization and Eigen \cite{eigen} for linear algebra.
The algorithm is exposed to the user via a sklearn-compatible Python API.

\subsection{Base Hypothesis Subclasses}
At each boosting iteration, \name{} chooses the subclass of base hypotheses to comprise binary decision trees (BDTs) with probability $p_t$ or linear regressors with random Fourier features (LRFs) with probability $(1-p_t)$.
Furthermore, if BDTs are selected, the maximum depth of the trees in the subclass is chosen uniformly at random between $D_{min}$ and $D_{max}$, resulting in $K=N_D+1$ unique choices for the subclass at each iteration, where $N_D=D_{max}-D_{min}+1$.
The corresponding PMF is given by: $\Phi=[\frac{p_t}{N_D},\ldots,\frac{p_t}{N_D}, 1-p_t]$.
Note that the PMF $\Phi$ is fully parameterized by $p_t$, $D_{min}$ and $D_{max}$.
A full list of hyper-parameters is provided in Appendix \ref{sec:snapboost-hp}.

\subsection{Binary Decision Trees}
In a regression tree,  each node represents a test on a feature,  each branch the outcome of the test and each leaf node a continuous value. 
The tree is trained using all or a subsample of the examples and features in the train set, where the example/feature sampling ratios ($r_{n}$ and $r_{d}$) are hyper-parameters.
In order to identify the best split at each node, one must identify the feature and feature value which, if split by, will optimize \eqref{eq:regression}. 
The tree-building implementation in \name is defined in three steps as follows. 
Steps 1 and 2 are performed only \emph{once} for all boosting iterations, whereas Step 3 is performed on each node, for each boosting iteration at which a BDT is chosen.

\textbf{Step 1.} 
We sort the train set for each feature~\cite{guillame2018arxiv,mehta1996sliq,shafer96vldb}. 
This step reduces the complexity of finding the best split at each node, which is a critical training performance bottleneck. 
However, it also introduces a one-off overhead:  the sort time, which has a complexity of $O(dn\log(n))$.

\textbf{Step 2.} 
We build a compressed representation of the input dataset to further reduce the complexity of finding the best split at each node. 
We use the sorted dataset from Step 1 to build a histogram~\cite{KDD16_xgboost,Zhang2017GPUaccelerationFL} for each feature.
The number of histogram bins $h$ can be at most 256 and thus typically $h\ll n$.
For each feature, its histogram bin edges are constructed before boosting begins, by iterating over the feature values and following a greedy strategy to balance the number of examples per bin. 
The complexity of building this histogram is $O(dn)$. 
Each histogram bin also includes statistics necessary to accelerate the computation of the optimal splits.
While the bin edges remain fixed across boosting iterations, these statistics are continually recomputed during tree-building.

\textbf{Step 3.} The actual construction of the tree is performed using a depth-first-search algorithm~\cite{DBLP:books/daglib/0023376}. 
For each node, two steps are performed: a) finding the best split, and b) initializing the node children. The complexity of step a) is $O(dr_{d}h)$: instead of iterating through the feature values of each example, we iterate over the histogram bin edges. In step b), we first assign the node examples to the children, an operation of complexity $O(n_{node})$, where $n_{node}$ is the number of examples in the node being split. Then, we update the bin statistics, a step of complexity $O(n_{node} dr_{d})$. 
Assuming a complete tree of depth $D$, the overall complexity of step 3 is $O(2^D dr_{d}h + dr_{d}nr_{n}D)$ for each boosting iteration at which a BDT is chosen. 

\subsection{Linear Regressors with Random Fourier Features}

We use the method proposed in \cite{10.5555/2981562.2981710} to learn a linear regressor on the feature space induced by a random projection matrix, designed to approximate a given kernel function.
The process is two-fold:

\textbf{Step 1.} 
First, we map each example $x \in \mathbb{R}^d$ in the train set $X$ to a low-dimensional Euclidean inner product space using a randomized feature map, $z$, that uniformly approximates the Gaussian radial basis kernel $\mathcal{K}(x,x') = \exp(-\gamma ||x-x'||^2)$. 
The feature map  $z:\mathbb{R}^d \rightarrow \mathbb{R}^c$ is defined as $z(x) = \sqrt{2/c} [\cos(\xi^T_1 x+\tau_1), ... \cos(\xi^T_c x+\tau_c)]^T$, where the weights $\xi_i$ are i.i.d samples from the Fourier transform of the Gaussian kernel and the offsets $\tau_i$ uniformly drawn from $[0,2 \pi]$. 
The complexity of projecting the train set onto the new feature space is essentially given by the multiplication of the feature matrix $X\in\mathbb{R}^{n\times d}$ with the weights matrix $\xi\in\mathbb{R}^{d\times c}$, an operation of complexity $O(n d c)$. 
Similarly to the tree histograms, the randomized weights and offsets are generated only \emph{once}, for all boosting iterations. 
The dimensionality of the projected space, $c$, is a hyper-parameter and is typically chosen as $c<100$.

\textbf{Step 2.} 
Using the projection of the train set as input $X'\in\mathbb{R}^{n\times c}$, we solve the sample-weighted least-squares problem defined in \eqref{eq:regression}, adding L2-regularization as follows: 
$\sum_{i=1}^n h_i \left(y'_i - w^T x'_i\right)^2 + \alpha \left\Vert w \right\Vert^2$, where $y'_i = -g_i/h_i$ are the regression targets.
Given the low dimensionality, $c$, of the new feature space, we solve the least-squares problem by computing its closed-form solution $(X'^T X' + \alpha I)^{-1} X' y'$.
The complexity of computing this solution is dominated by the complexity of the $X'^T X'$ operation $O(n c^2)$ or the complexity of the inversion operation $O(c^3)$. 
As $c\ll n$, the complexity of this step is $O(n c^2)$, for each boosting iteration at which a LRF is chosen.

Which step dominates the overall complexity of \name{} strongly depends on the range of tree depths, $D_{min}$ and $D_{max}$, the dimensionality of the projected space, $c$, as well as the PMF, $\Phi$, that controls the mixture.
When performance is paramount, one may enforce constraints on $\Phi$ (e.g., $p_t \geq 0.9$) to explicitly control the complexity by favoring BDTs over LRFs or vice-versa.


\section{Experimental Results}
\label{sec:experiments}

In this section, we evaluate the performance of \name against widely-used boosting frameworks.

\textbf{Hardware and Software.} 
The results in this section were obtained using a multi-socket server with two 20-core Intel(R) Xeon(R) Gold 6230 CPUs @2.10GHz, 256 GiB RAM, running Ubuntu 18.04. 
We used XGBoost v1.1.0, LightGBM v2.3.1, CatBoost v.0.23.2 and KTBoost v0.1.13.

\textbf{Hyper-parameter Tuning.} 
All boosting frameworks are tuned using the successive halving (SH) method \cite{pmlr-v51-jamieson16}. 
Our SH implementation is massively parallel and leverages process-level parallelism, as well as multi-threading within the training routines themselves. 
Details regarding the SH implementation and the hyper-parameter ranges can be found in Appendix \ref{app:sh} and Appendix \ref{app:hp-space} respectively.

\begin{table}[]
\centering
\caption{Average test loss using 3x3 nested cross-validation for the OpenML datasets.}
\label{tab:openml}
\resizebox{\textwidth}{!}{%
\begin{tabular}{llcclll}
\hline
\textbf{ID} & \textbf{Name}    & \multicolumn{1}{l}{\textbf{Examples}} & \multicolumn{1}{l}{\textbf{Features}} & \textbf{XGBoost}        & \textbf{LightGBM}       & \textbf{\name{}}                \\ \hline
4154        & CreditCardSubset & 14240                                 & 30                                    & 3.9990e-01              & 4.3415e-01              & \textbf{3.8732e-01}              \\
1471        & eeg-eye-state    & 14980                                 & 14                                    & 1.3482e-01              & 1.4184e-01              & \textbf{1.2966e-01}              \\
4534        & PhishingWebsites & 11055                                 & 30                                    & 7.3332e-02              & \textbf{7.1488e-02}     & 7.2481e-02                       \\
310         & mammography      & 11183                                 & 6                                     & 2.6594e-01              & 2.7083e-01              & \textbf{2.6437e-01}              \\
734         & ailerons         & 13750                                 & 40                                    & 2.5902e-01              & \textbf{2.5667e-01}     & 2.5808e-01                       \\
722         & pol              & 15000                                 & 48                                    & 3.6318e-02              & 3.5973e-02              & \textbf{3.4507e-02}              \\
1046        & mozilla4         & 15545                                 & 5                                     & 1.6973e-01              & 1.6623e-01              & \textbf{1.6300e-01}              \\
1019        & pendigits        & 10992                                 & 16                                    & 1.8151e-02              & 1.9637e-02              & \textbf{1.8057e-02}              \\
959         & nursery          & 12960                                 & 8                                     & 2.3469e-04              & 2.3107e-07              & \textbf{7.0620e-08}              \\
977         & letter           & 20000                                 & 16                                    & 3.0621e-02              & 2.9400e-02              & \textbf{2.6005e-02}              \\ \hline
\multicolumn{4}{r}{Average Rank:}                                                                              & \multicolumn{1}{c}{2.6} & \multicolumn{1}{c}{2.2} & \multicolumn{1}{c}{\textbf{1.2}} \\ \hline
\end{tabular}%
}
\end{table}

\subsection{OpenML Benchmark}

We compare XGBoost, LightGBM and \name{} across 10 binary classification datasets sourced from the OpenML platform \cite{OpenML2013}. 
Details regarding the characteristics of the datasets as well as their corresponding preprocessing steps are presented in Appendix \ref{app:datasets}.
Since the datasets are relatively small (between 10k and 20k examples), 3x3 nested stratified cross-validation was used to perform hyper-parameter tuning and to obtain a reliable estimate of the generalization loss.
For each of the 3 outer folds, we perform tuning using cross-validated SH over the 3 inner folds.
Some of the datasets also exhibit class imbalance, thus a sample-weighted logistic loss is used as the training, validation and test metric. 
The test losses (averaged over the 3 outer folds) are presented in Table \ref{tab:openml}.
We observe that XGBoost does not win on any of the 10 datasets (average rank 2.6), LightGBM wins on 2 (average rank 2.2), whereas \name{} wins on 8/10 of the datasets (average rank 1.2).

\textbf{Statistical Significance. }
When comparing a number of ML algorithms across a large collection of datasets, rather than applying parametric statistical tests (such as Student's t-test) on a per-dataset basis, it is preferable to perform non-parametric tests across the collection \cite{demvsar2006statistical}. 
Firstly, we apply the Iman and Davenport's correction of the Friedman omnibus test \cite{iman1980approximations} to verify differences exist within the family of 3 algorithms ($p < 0.002$).
Secondly, we perform pairwise testing using the Wilcoxon signed-rank test \cite{JMLR:v17:benavoli16a} (correcting for multiple hypotheses via Li's procedure \cite{li2008two}) to verify differences exist between the algorithms themselves. 
We find that the null hypothesis can be safely rejected when comparing \name{} with XGBoost ($p<0.004$) and LightGBM ($p<0.02$). 
However, when comparing XGBoost and LightGBM, the null hypothesis cannot be rejected ($p>0.36$).

\subsection{Kaggle Benchmark}

In order to evaluate the generalization capability and performance of \name{} on more realistic use-cases we use 3 datasets from the Kaggle platform. 
Details of the datasets, as well as the preprocessing pipeline that was used are provided in Appendix \ref{app:datasets}.
Since these datasets are relatively large, we perform a single train/validation/test split.
Hyper-parameter tuning (via SH) is performed using the train and validation sets. 
Once the tuned set of hyper-parameters is obtained, we re-train using the combined train and validation set and evaluate on the test set.
The re-training is repeated 10 times using different random seeds, in order to quantify the role of stochastic effects on the test loss. 

In Figure \ref{fig:kaggle_loss}, we compare XGBoost (XGB), LightGBM (LGB), CatBoost (CAT), KTBoost (KT) and \name{} in terms of test loss
\footnote{Experiments that took longer than 8 hours were killed and do not appear in the plots.}.
To quantify the effect of including LRFs in the ensemble, we present results for \name{} restricted only to BDTs (SB-T) as well as the unrestricted version (SB). 
We observe that in all 3 datasets, SB generalizes better than the frameworks that use only BDTs. 
In Figure \ref{fig:kaggle_loss_cc}, we observe that SB achieves a similar test loss to KT, that uses BDTs and KRRs. 
Next, in Figure \ref{fig:kaggle_time}, we compare the frameworks in terms of experimental time.
We observe that, while the SB time is comparable to that of XGB and LGB, both CAT and KT are significantly slower.
This behaviour is expected for KT since it (a) learns two base hypotheses at every iteration and (b) is implemented using sklearn components.
For CAT, ordered boosting is known to introduce overheads when using a small number of examples\footnote{https://github.com/catboost/catboost/issues/505}, 
which is always the case in the early stages of SH.

\begin{figure*}
  \centering

  \subfigure[Credit Card Fraud \cite{creditcard}] {
  	\includegraphics[width=0.29\columnwidth]{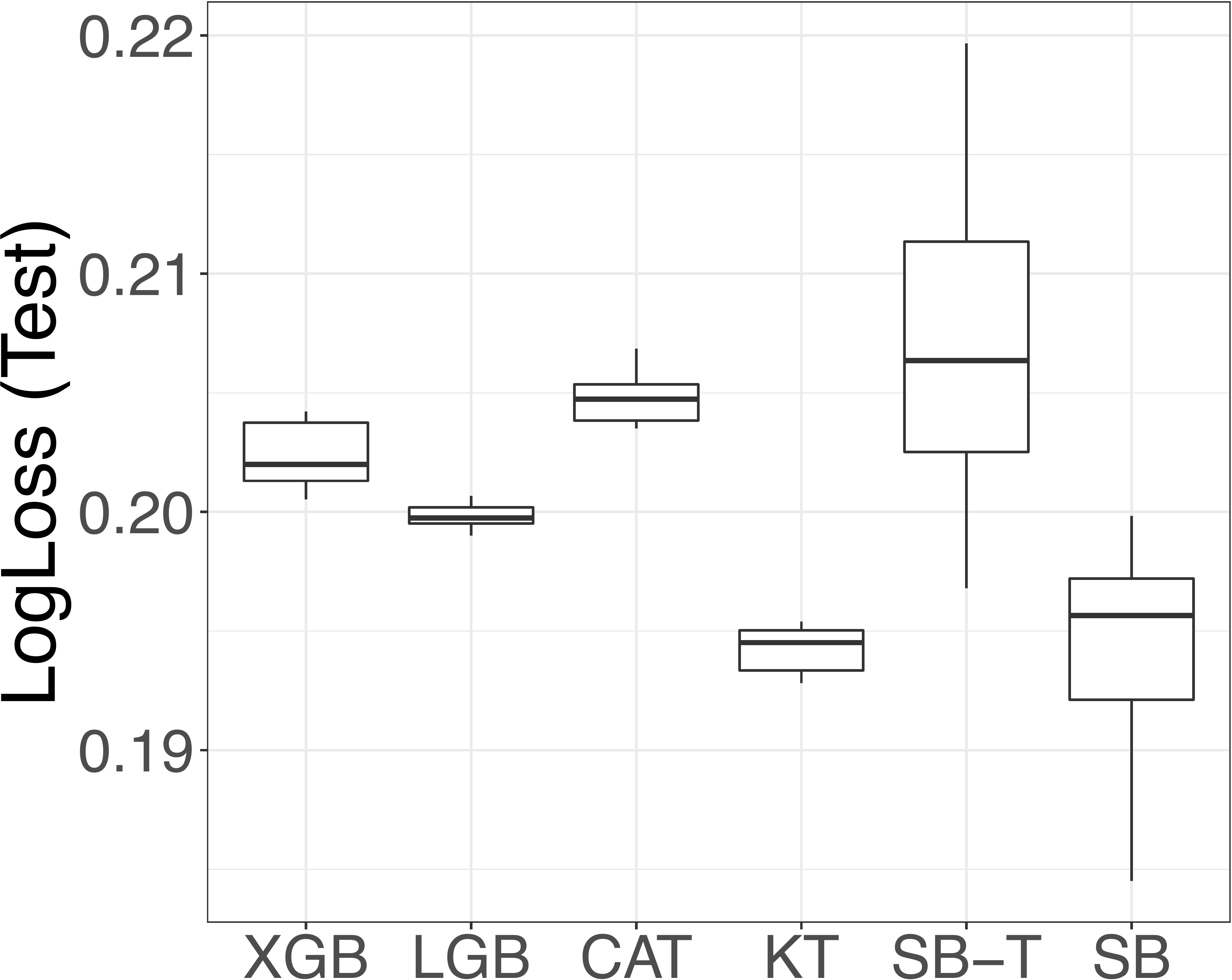}
	\label{fig:kaggle_loss_cc}
  }
  \subfigure[Rossmann Store Sales \cite{rossmann}] {
  	\includegraphics[width=0.29\columnwidth]{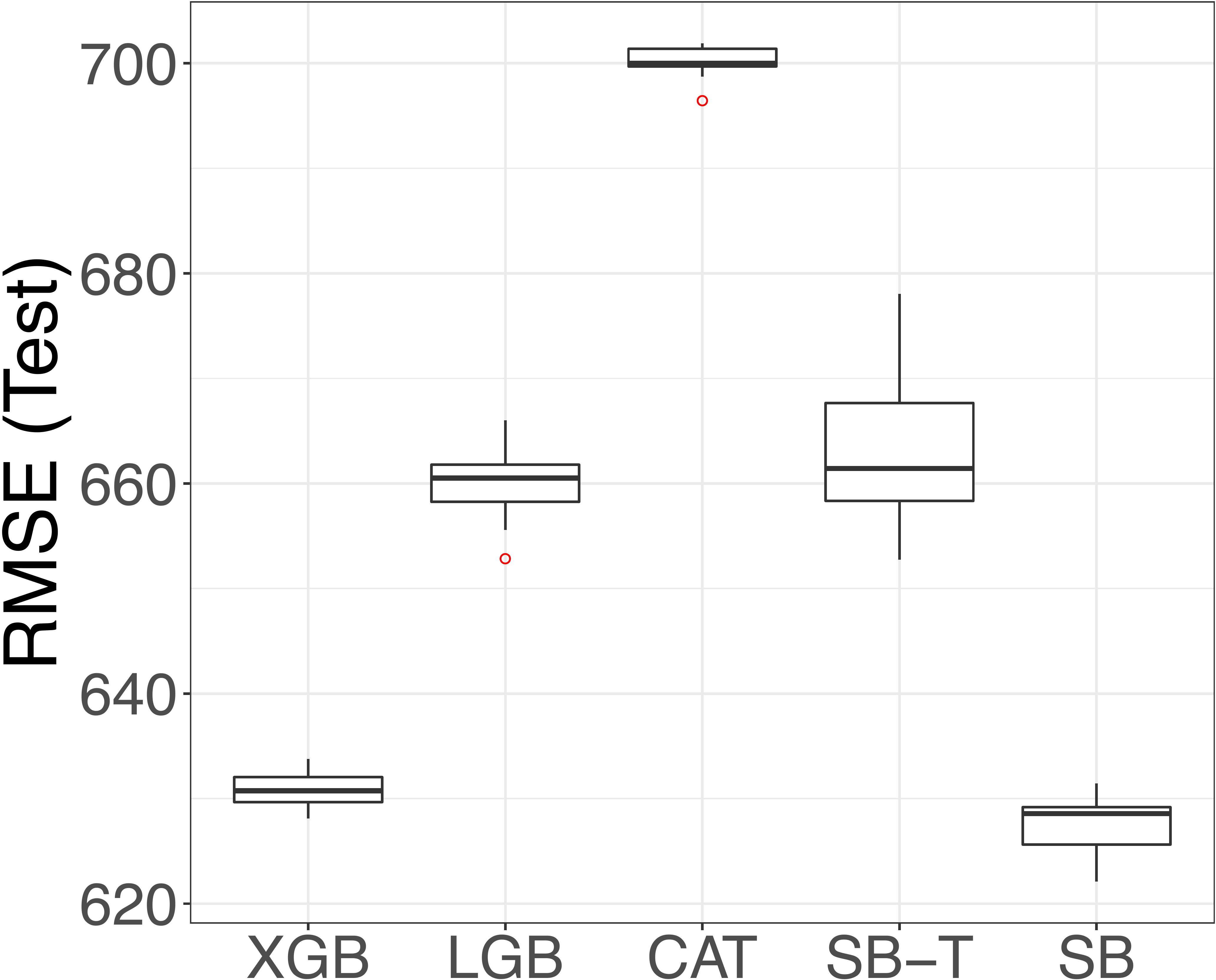}
  	\label{fig:kagge_loss_rm}
  }
  \subfigure[Mercari Price Suggestion \cite{priceprediction}] {
  	\includegraphics[width=0.29\columnwidth]{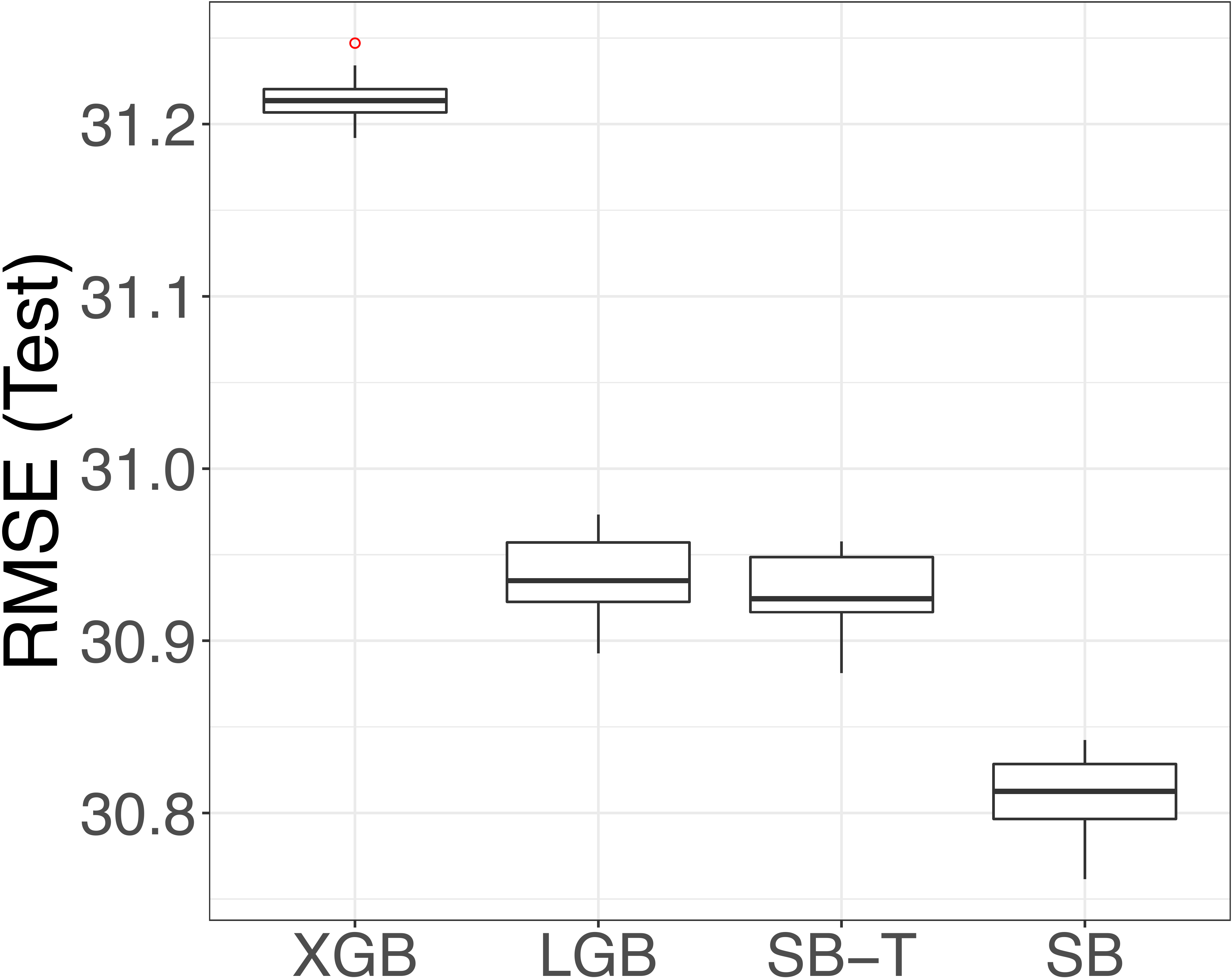}
	\label{fig:kaggle_loss_pp}
  }
  \caption{Test loss for the different boosting frameworks (10 repetitions with different random seeds).}
  \label{fig:kaggle_loss}
\end{figure*}

\begin{figure*}
  \centering

  \subfigure[Credit Card Fraud \cite{creditcard}] {
  	\includegraphics[width=0.29\columnwidth]{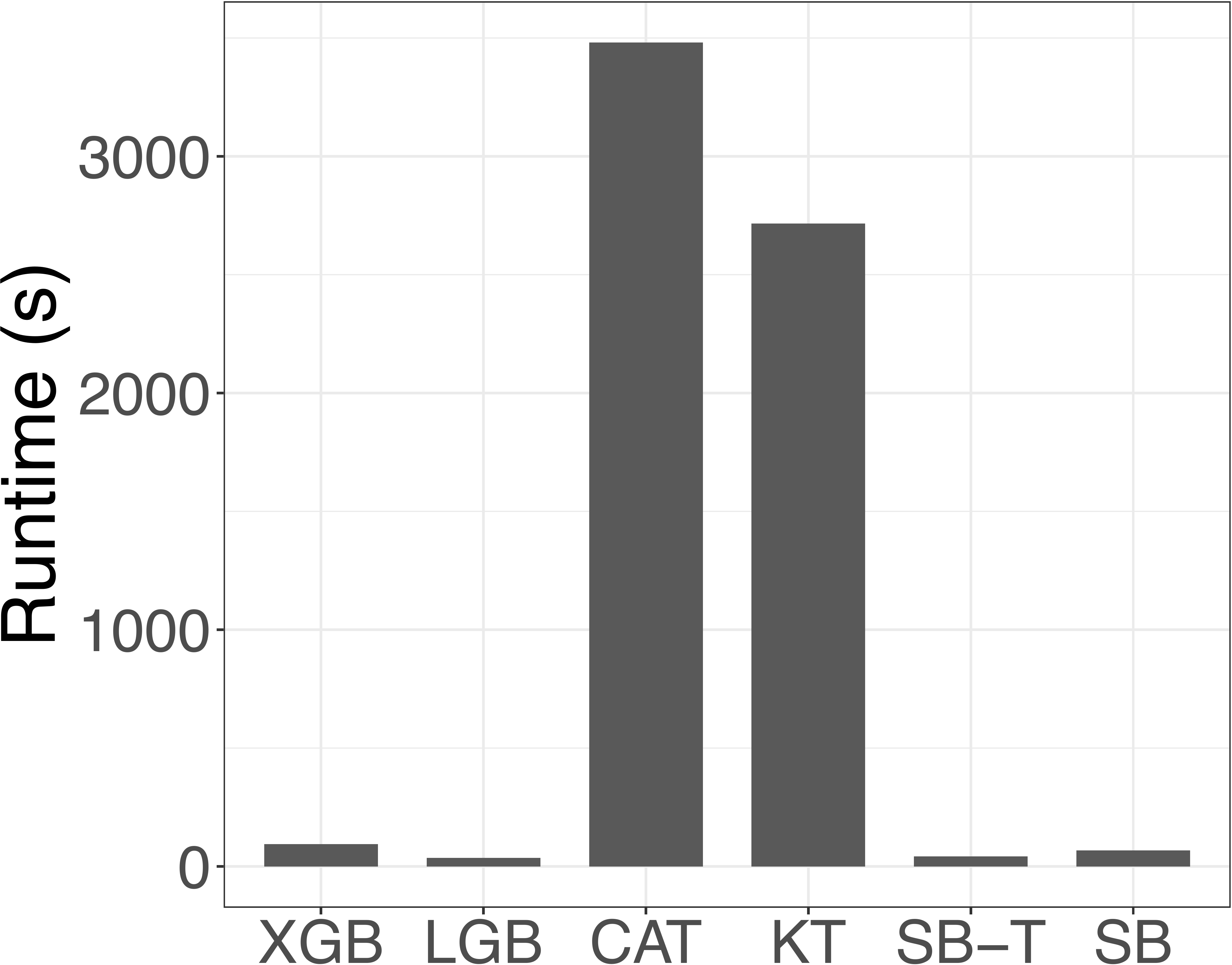}
	\label{fig:kaggle_time_cc} 
  }
  \subfigure[Rossmann Store Sales \cite{rossmann}] {
  	\includegraphics[width=0.29\columnwidth]{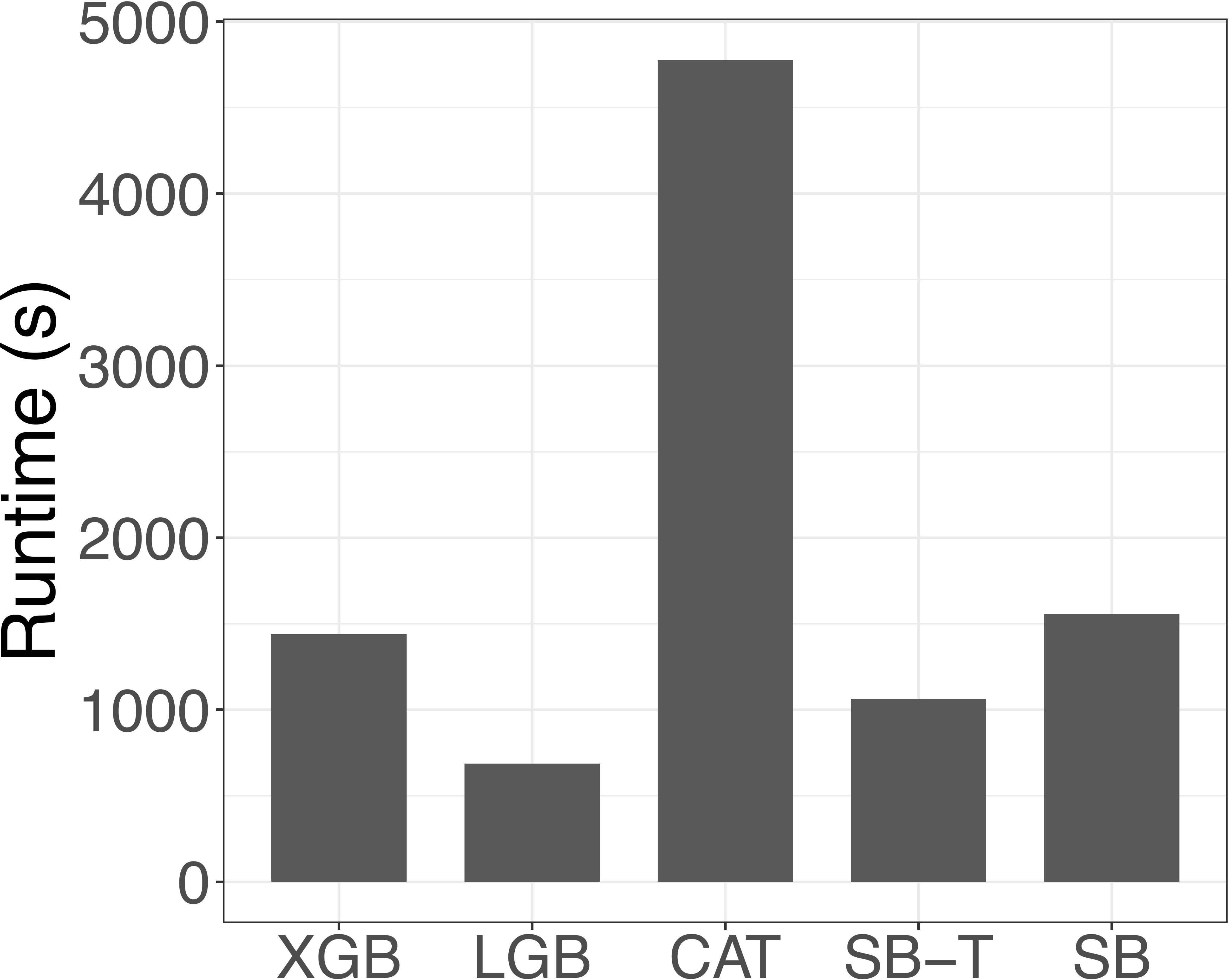}
	\label{fig:kaggle_time_rm}
  }
  \subfigure[Mercari Price Suggestion \cite{priceprediction}] {
  	\includegraphics[width=0.29\columnwidth]{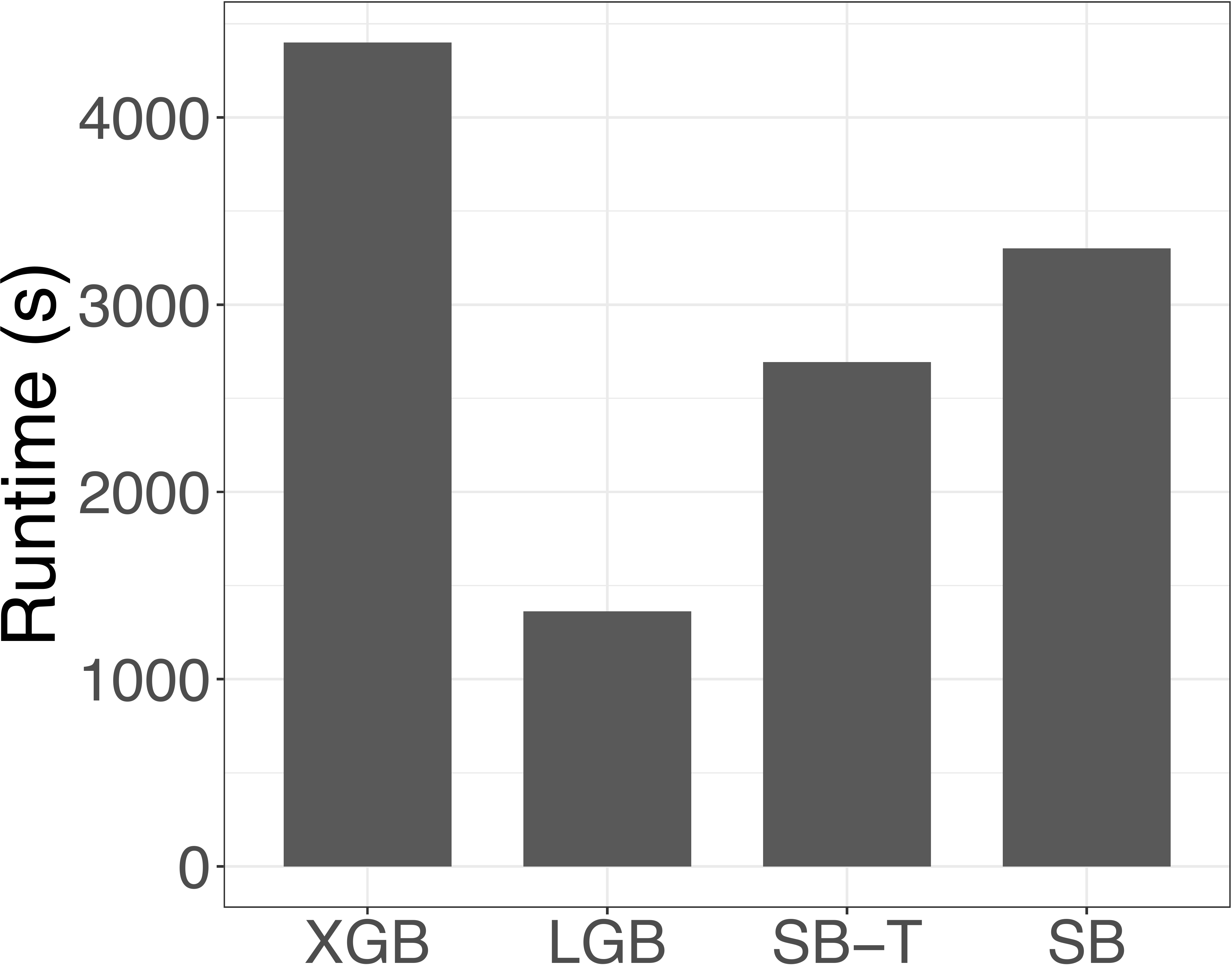}
	\label{fig:kaggle_time_pp}
  }
  \caption{End-to-end experiment time (tuning, re-training and evaluation) for all frameworks.}
  \label{fig:kaggle_time}
\end{figure*}


\section{Conclusion}

In this paper, we have presented a Heterogeneous Newton Boosting Machine (HNBM), with theoretical convergence guarantees, that selects the base hypothesis class stochastically at each boosting iteration.
Furthermore, we have described an implementation of HNBM, \name{}, that learns heterogeneous ensembles of BDTs and LRFs.
Experimental results on 13 datasets indicate that \name{} provides state-of-the-art generalization, without sacrificing performance.
As a next step, we plan to further enhance the performance of \name by taking advantage of GPUs.

\newpage 

\section*{Broader Impact}

Boosting machines are generally considered most effective in application domains involving large amounts of tabular data. 
We have directly encountered use-cases in the retail, financial and insurance industries, and there are likely to be many others.
Such examples include: credit scoring, residential mortgage appraisal, fraud detection and client risk profiling.

The tables used to train these models may contain sensitive personal information such as gender, ethnicity, health, religion or financial status.
It is therefore critical that the algorithms used do not \textit{leak} such information. 
In particular, an adversary should not be able to exploit a trained model to discover sensitive information about an individual or group. 
While we do not address these concerns in this paper, efforts are ongoing in the research community to develop privacy-preserving boosting machines \cite{li2019privacy}. 

Given the application domains where boosting machines are currently deployed, another important issue is fairness. 
Formally, we would like that certain statistics regarding the decisions produced by the trained model are consistent across individuals or groups of individuals. 
This definition imposes new constraints, which our training algorithms must be modified to satisfy.
While this problem has received a significant amount of attention from the community in general, only a few works have looked at designing boosting machines that satisfy fairness constraints \cite{fish2015fair, grari2019fair}.
Given the widespread use of boosting machines in production systems, this is a topic worthy of future investigation. 

{
\small
\bibliographystyle{plain}
\bibliography{bibliography}
}

\newpage
\appendix

\section{Datasets}
\label{app:datasets}

\textbf{OpenML.}
The OpenML datasets are identified by their unique ID and can be downloaded programmatically using the OpenML API (\texttt{openml.datasets.get\_dataset(ID)}).
The IDs of the 10 datasets used in this work, as well as the number of examples and features, are provided in Table \ref{tab:openml} in the main manuscript.
Categorical features are encoded using scikit-learn's label encoder (\texttt{sklearn.preprocessing.LabelEncoder}).
All of the datasets correspond to binary classification problems, with varying degrees of class imbalance. 
Stratified sampling (\texttt{sklearn.cross\_validation.StratifiedKFold}) is used to construct the outer and inner folds for the nested cross-validation.
The loss function used for training and evaluation is sample-weighted logistic loss (\texttt{sklearn.metrics.log\_loss}).

\textbf{Rossmann Store Sales.}
We download the raw data programmatically using the Kaggle API, which produces two files: \texttt{train.csv} and \texttt{store.csv}.
Both files are read into pandas data frames and the missing values are replaced with zeros.
We then follow several preprocessing steps inspired by an existing Kaggle kernel\footnote{https://www.kaggle.com/cast42/xgboost-in-python-with-rmspe-v2}.

Firstly, we merge the train data frame with the store data frame, on the \texttt{Store} column.
The resulting data frame is then sorted in ascending order by date.
We then filter the data to exclude any stores that are not open, or have 0 sales. 
Next, we perform label encoding of the three categorical variables \texttt{StoreType}, \texttt{Assortment} and \texttt{StateHoliday}.
We then extract four numeric features (month, year, day, and week of year) from the date feature.
We create a feature corresponding to the number of months since the competition was open, 
and a similar feature corresponding to how many months a promotion has been running.
We create one additional binary feature indicating whether the month is in the promotion interval. 
We then extract the \texttt{Sales} column as the labels and apply a logarithmic transformation.
After all the pre-processing steps described above, the data matrix has 20 features.

While the prediction is always performed in the logarithmic domain, when evaluating the models we transform both the labels and the model predictions back into their original domain. 
The loss function used for training and evaluation is the standard root mean-squared error (\texttt{sklearn.metrics.mean\_squared\_error}).
To create the train/validation/test split, we first extract all rows corresponding to the month of July.
The extracted rows are then split 50/50 to form the validation and test set using the \texttt{train\_test\_split} function from scikit-learn with seed 42.
The remaining rows are used for training only. The number of examples used for training, validation and test are 758762, 42788, and 42788, respectively.

\textbf{Mercari Price Suggestion.}
We download the raw data programmatically using the Kaggle API, which produces the file \texttt{train.tsv}.
We then follow several preprocessing steps inspired by an existing Kaggle kernel\footnote{https://www.kaggle.com/tsaustin/mercari-price-recommendation}.

Firstly, we remove the products with price 0.
Next, we replace missing values in the \texttt{name}, \texttt{category\_name} and \texttt{item\_description} columns with a constant string.
We then \textit{clean} these 3 columns, by 1) removing non-alpha characters, 2) converting to lower-case, and 3) applying scikit-learn's 
\texttt{CountVectorizer} with English stop-words and the maximum number of features set to 30.
Next, we perform target encoding on the \texttt{brand\_name} feature (\texttt{data`brand\_name'].map(data.groupby(`brand\_name')[`price'].mean())}) and 
we encode the \texttt{shipping} column using one-hot encoding (\texttt{pandas.get\_dummies}).
After all the pre-processing steps described above, the data matrix has 98 features.

Then, we extract the \texttt{price} column as the labels and apply a logarithmic transformation.
The root mean squared error loss function is used for training and evaluation, with labels and predictions transformed back to the original domain.
We then run L1-normalization on the rows and perform an 80/20 trainval/test split (with seed 42).
The trainval set is then split 70/30 to generate the train and validation sets (with seed 42).
The number of examples used for training, validation and test are 829729, 355599, and 296333 respectively.

\textbf{Credit Card Fraud.}
We download the raw data programatically using the Kaggle API, which produces the file \texttt{creditcard.csv}.
We extract the 31-st column as the binary labels and remove the first column \texttt{Time}.
With the remaining columns we apply scikit-learn's \texttt{StandardScaler} followed by L1-normalization of the rows.
After all the pre-processing steps described above, the data matrix has 31 features.

We then perform a stratified 75/25 trainval/test split (seed 42), followed by a 70/30 train/val split (seed 42).
The number of examples used for training, validation and test are 149523, 64082, and 71202 respectively.
Since the data is highly imbalanced, for training and evaluation we use the sample-weighted logistic loss.
The sample weights are computed using the \texttt{compute\_sample\_weight} function from scikit-learn (\texttt{sklearn.utils.class\_weight}), using the \texttt{balanced} option.

\section{Hyper-Parameters of \name{}}
\label{sec:snapboost-hp}

In the following we list the hyper-parameters of the \name{} algorithm. We highlight in \textbf{\emph{bold}} the hyper-parameters that typically require tuning when performing hyper-parameter optimization.


\begin{itemize}

    \item \textbf{\emph{num\_round}} (int): the number of boosting iterations.
    \item \emph{objective} ('mse', 'logloss'): the loss function optimized by the boosting algorithm.
    \item \textbf{\emph{learning\_rate}} (float): the learning rate of the boosting algorithm.
    \item \emph{random\_state} (int): the random seed used at training time.
    \item \textbf{\emph{colsample}} (float): the fraction of features to be subsampled at each boosting iteration.
    \item \textbf{\emph{subsample}} (float): the fraction of examples to be subsampled at each boosting iteration.
    \item \textbf{\emph{lambda\_l2}} (float): L2-regularization parameter applied to the tree leaf values.
    \item \emph{early\_stopping\_rounds} (int): the number of boosting iterations used by early stopping.
    \item \emph{base\_score} (float): the initial prediction of all examples.
    \\
    
    \item \textbf{\emph{tree\_probability}} (float): the probability of selecting a tree at a boosting iteration.
    \item \textbf{\emph{min\_max\_depth}} (int) : the minimum max\_depth of a tree in the ensemble.
    \item \textbf{\emph{max\_max\_depth}} (int): the maximum max\_depth of a tree in the ensemble.
    \item \emph{use\_histograms} (bool): whether the tree uses histogram statistics or not.
    \item \emph{hist\_nbins} (int): number of histogram bins if \emph{use\_histograms} is \texttt{True}.
    \item \emph{tree\_n\_threads} (int): the number of threads used to train the trees.
    \\
    
    \item \textbf{\emph{alpha}} (float): the regularizer of the ridge regressor. 
    \item \textbf{\emph{fit\_intercept}} (bool): whether to fit the intercept of the ridge regressor or not.
    \item \emph{ridge\_n\_threads} (int): the number of threads used to train the ridge regressor.
    \\
    
    \item \textbf{\emph{gamma}} (float): the gamma value of the Gaussian radial basis kernel.
    \item \textbf{\emph{n\_components} (c)} (int): the dimension of the randomized feature space.
    \item \emph{kernel\_n\_threads} (int): the number of threads used to compute the dataset projection onto the new randomized feature space.
    
\end{itemize}

\section{Hyper-Parameter Optimization Method: Successive Halving}
\label{app:sh}

\begin{algorithm}[t]
\begin{algorithmic}[1]
\State \textbf{Input: } initial number of configurations $n_0$ 
\State \textbf{Input: } elimination rate $\eta$
\State \textbf{Input: } minimum resource $r_{min}$
\State \textbf{Input: } number of processor cores $\texttt{num\_cores}$
\State Determine number of stages $s_{max} = \lfloor-\log_{\eta}(r_{min})\rfloor$
\State \textbf{Assert: } $n_0 \geq \eta^{s_{max}}$
\State Initialize set $C$ by sampling $n_0$ configurations at random
\For {$i=0,1,\ldots,s_{max}$}
	\State Set number of configurations in this stage: $n_i = \lfloor n_0\eta^{-i}\rfloor$
	\State Set resource in this stage:  $r_i = \eta^{i-s_{max}}$
	\State Set number of processes in this stage: $p_i = \min\left(\texttt{num\_cores}, |C|\right)$
	\State Set number of threads in this stage: $t_i = \lfloor \texttt{num\_cores} / p_i \rfloor$
	\State Populate input queue $Q_{in}$ with all configurations $c\in C$
	\ParFor {$p=0,1,\ldots,p_i$}
		\State Start new process with $t_i$ threads
		\While {$Q_{in}$ is not empty}
			\State Pull configuration $c$ from $Q_{in}$
			\State Train using fraction $r_i$ of training examples and compute validation loss $l$ \label{step:train}
			\State Push $(c,l)$ pair into output queue $Q_{out}$
		\EndWhile
	\EndParFor
	\State Sort output queue $Q_{out}$ by validation loss
	\State Update set $C$ to comprise the $n_i/\eta$ configurations with lowest validation loss
\EndFor
\State \textbf{Output: } Configuration in $C$ with lowest validation loss
\end{algorithmic}
\caption{Successive Halving \cite{pmlr-v51-jamieson16} with process-level and thread-level parallelism.}
\label{alg:sh}
\end{algorithm}

To perform hyper-parameter tuning we use the successive halving (SH) method from \cite{pmlr-v51-jamieson16}. 
SH begins by training and evaluating a large number of hyper-parameter configurations using only a small fraction of the training examples (otherwise referred to as \textit{resource}).
The configurations are then ranked according to their validation loss and only the best-performing configurations are carried forward into the next \textit{stage}, in which they are trained using a larger resource. 
This process repeats until the final stage, where all remaining configurations are trained using the maximal resource (i.e., the full train set).
The general idea is that bad configurations can be eliminated in the earlier stages, without consuming a significant number of CPU cycles.
Our implementation of SH is massively parallel and leverages both process-level parallelism (across configurations) and thread-level parallelism (within configurations).
The implementation is described in full in Algorithm \ref{alg:sh}.
 
For the OpenML benchmark we used Algorithm \ref{alg:sh} with $n_0=512$, $\eta=4$ and $r_{min}=1/4$. 
Since 3x3 nested cross-validation was used in this benchmark, the SH method was performed independently for each of the 3 outer folds. 
For each outer fold, we perform a cross-validated variant of Algorithm \ref{alg:sh} in which Step \ref{step:train} is performed across the 3 inner folds. 
Specifically, each configuration is trained and evaluated for every inner fold, and the validation loss used to rank the configurations is given as the mean across the 3 inner folds.
For the Kaggle benchmark we used Algorithm \ref{alg:sh} with $n_0=256$, $\eta=4$ and $r_{min}=1/16$.
Additionally, we also leveraged the early-stopping functionality of the boosting frameworks, so that the training of each configuration may terminate early, if it is detected that the validation loss has not improved in the last 10 boosting iterations.
Identical hyper-parameter ranges were used in both benchmarks, and are given in full in the next section. 

\section{Hyper-Parameter Search Space}
\label{app:hp-space}

In Tables~\ref{tab:hp-xgb},~\ref{tab:hp-lgb},~\ref{tab:hp-cat},~\ref{tab:hp-kt} and ~\ref{tab:hp-mix} we list the hyper-parameter ranges for XGBoost, LightGBM, CatBoost, KTBoost and \name{}, respectively.

\textbf{Maximum depth.}
LightGBM enforces a constraint on the maximum number of leaves that corresponds to a constraint on the maximum (complete) tree depth of 16. 
Furthermore, in our setup we had to further limit the maximum depth to 15 in order to avoid out-of-memory errors.
For CatBoost, we had to limit the maximum depth to 16, for the same reason.
The other three frameworks (XGBoost, \name{} and KTBoost) were able to use trees of depth up to 19 without any memory issues. 

\textbf{Ordered boosting.}
CatBoost offers both ordered boosting (\texttt{boosting\_type=Ordered}) as well as standard boosting (\texttt{boosting\_type=Plain}).
All CatBoost experiments presented in the paper were performed using the default setting of ordered boosting. 
Since CatBoost seems to be slower than the other frameworks, afterwards we re-ran the experiments using \texttt{boosting\_type=Plain} but were only able to see around a 20\% improvement in runtime. 

\textbf{KTBoost.}
For KTBoost, we were unable to use the early stopping functionality as the library generated errors.
In the main manuscript, we report the KTBoost results obtained for the Credit Card Fraud dataset. 
In the meantime, we additionally collected the KTBoost results for the Rossman Store Sales dataset: 54 hours (total tuning and evaluation time) vs. less than 30 minutes (\name{}), average test score (root mean squared error) 659.73 vs. 627.50 (\name{}).



\begin{table}[h!]
    \parbox[t]{.45\linewidth}{
        \centering
        \caption{XGBoost hyper-parameter ranges.}
        \label{tab:hp-xgb}
        \begin{tabular}{lccc}
            \hline
            \textbf{Hyper-parameter} & \multicolumn{1}{l}{\textbf{Min}} & \multicolumn{1}{l}{\textbf{Max}} & \multicolumn{1}{l}{\textbf{Scale}} \\ \hline
            max\_depth               & 1                                 & 19                                & Linear                              \\ 
            num\_round               & 10                                & 1000                              & Linear                              \\ 
            learning\_rate           & -2.5                              & -1                                & Log10                               \\ 
            colsample\_bytree        & 0.5                               & 1.0                               & Linear                              \\ 
            subsample                & 0.5                               & 1.0                               & Linear                              \\ 
            lambda                   & -2                                & -2                                & Log10                               \\ \hline
            tree\_method             & \multicolumn{3}{c}{hist}                                                                                   \\ 
            max\_bin                 & \multicolumn{3}{c}{256}                                                                                    \\ \hline
        \end{tabular}
    }
    \hfill
    \parbox[t]{.45\linewidth}{
        \centering
        \caption{LightGBM hyper-parameter ranges.}
        \label{tab:hp-lgb}
        \begin{tabular}{lccc}
            \hline
            \textbf{Hyper-parameter} & \multicolumn{1}{l}{\textbf{Min}} & \multicolumn{1}{l}{\textbf{Max}} & \multicolumn{1}{l}{\textbf{Scale}} \\ \hline
            max\_depth               & 1                                 & 15                                & Linear                              \\ 
            num\_round               & 10                                & 1000                              & Linear                              \\ 
            learning\_rate           & -2.5                              & -1                                & Log10                               \\ 
            feature\_fraction        & 0.5                               & 1.0                               & Linear                              \\ 
            bagging\_fraction        & 0.5                               & 1.0                               & Linear                              \\ 
            lambda\_l2               & -2                                & -2                                & Log10                               \\ \hline
            max\_bin                 & \multicolumn{3}{c}{256}                                                                                    \\ \hline
        \end{tabular}
    }
\end{table}

\begin{table}[h!]
    \parbox[t]{.45\linewidth}{
        \centering
        \caption{CatBoost hyper-parameter ranges.}
            \label{tab:hp-cat}
        \begin{tabular}{lccc}
            \hline
            \textbf{Hyper-parameter} & \multicolumn{1}{l}{\textbf{Min}} & \multicolumn{1}{l}{\textbf{Max}} & \multicolumn{1}{l}{\textbf{Scale}} \\ \hline
            max\_depth               & 1                                 & 16                                & Linear                              \\ 
            n\_estimators            & 10                                & 1000                              & Linear                              \\ 
            learning\_rate           & -2.5                              & -1                                & Log10                               \\ 
            subsample                & 0.5                               & 1.0                               & Linear                              \\ 
            l2\_leaf\_reg            & -2                                & -2                                & Log10                               \\ \hline
            max\_bin                 & \multicolumn{3}{c}{256}                                                                                    \\ 
	    boosting\_type           & \multicolumn{3}{c}{Ordered}                                                                                \\ 
            bootstrap\_type          & \multicolumn{3}{c}{MVS}                                                                                    \\ 
            sampling\_frequency      & \multicolumn{3}{c}{PerTree}                                                                                \\ 
            grow\_policy             & \multicolumn{3}{c}{SymmetricTree}                                                                          \\ \hline
        \end{tabular}
    }
    \hfill
    \parbox[t]{.45\linewidth}{
        \centering
        \caption{KTBoost hyper-parameter ranges.}
        \label{tab:hp-kt}
        \begin{tabular}{lccc}
        \hline
        \multicolumn{1}{l}{\textbf{Hyper-parameter}} & \multicolumn{1}{l}{\textbf{Min}} & \multicolumn{1}{l}{\textbf{Max}} & \multicolumn{1}{l}{\textbf{Scale}} \\ \hline
        max\_depth                                     & 1                                 & 19                                & Linear                              \\ 
        n\_estimators                                  & 10                                & 1000                              & Linear                              \\ 
        learning\_rate                                 & -2.5                              & -1                                & Log10                               \\ 
        subsample                                      & 0.5                               & 1.0                               & Linear                              \\ 
        max\_features                                  & 0.5                               & \multicolumn{1}{l}{1.0}          & \multicolumn{1}{l}{Linear}         \\ 
        theta                                          & -1.5                              & 1.5                               & Log10                               \\ 
        alphaReg                                       & -6                                & \multicolumn{1}{l}{3}            & \multicolumn{1}{l}{Log10}          \\ 
        n\_components                                  & 1                                 & \multicolumn{1}{l}{100}          & \multicolumn{1}{l}{Linear}         \\ \hline
        update\_step                                   & \multicolumn{3}{c}{newton}                                                                                 \\ 
        nystroem                                       & \multicolumn{3}{c}{True}                                                                                   \\ 
        base\_learner                                  & \multicolumn{3}{c}{combined}                                                                               \\ \hline
        \end{tabular}
    }
\end{table}

\begin{table}[h!]
    \centering
    \caption{\name{} hyper-parameter ranges}
    \label{tab:hp-mix}
    \begin{tabular}{lccc}
    \hline
    \textbf{Hyper-parameter} & \textbf{Min} & \textbf{Max} & \textbf{Scale} \\ \hline
    num\_round               & 10           & 1000         & Linear         \\ 
    min\_max\_depth          & 1            & 19           & Linear         \\ 
    max\_max\_depth          & 1            & 19           & Linear         \\ 
    learning\_rate           & -2.5         & -1           & Log10          \\ 
    subsample                & 0.5          & 1.0          & Linear         \\ 
    colsample                & 0.5          & 1.0          & Linear         \\ 
    lambda\_l2               & -2           & -2           & Log10          \\ 
    tree\_probability        & 0.9          & 1.0          & Linear         \\ 
    fit\_intercept           & 0 (False)    & 1 (True)     & Linear         \\ 
    alpha                    & -6           & -3           & Log10          \\ 
    gamma                    & -3           & 3            & Log10          \\ 
    n\_components            & 1            & 100          & Linear         \\ \hline
    hist\_nbins              & \multicolumn{3}{c}{256}                     \\ \hline
    \end{tabular}
\end{table}

\section{NODE versus \name Benchmark}
\label{sec:node-mixboost}

In this section, we compare \name with Neural Oblivious Decision Ensembles (NODE)~\cite{popov2019neural}.
NODE constructs deep networks of \textit{soft} decision trees that can be trained using end-to-end back-propagation.

\textbf{Datasets.} For this benchmark we used 6 regression datasets as shown in Table~\ref{tab:uci}. 
These datasets have approximately 10K examples and 20 features on average. 
We chose to use these relatively small datasets since, as we will see, training NODE is fairly slow.
Firstly, we manually downloaded the data using the links provided in Table~\ref{tab:uci}. 
For some datasets, we concatenated the provided train and test data files (\emph{ailerons}, \emph{elevators}, \emph{puma32H}, and \emph{bank8FM}). 
We used the concatenated matrices as input to the train/validation/test splitting during hyper-parameter optimization. 
As labels, we used column 41 for \emph{ailerons}, column 20 for \emph{parkinsons}, column 17 for \emph{navalT}, column 19 for \emph{elevators}, column 33 for \emph{puma32h}, and column 9 for \emph{bank8FM} (column indices being 1-based). 
We do not perform additional data preprocessing.


\textbf{Infrastructure.} 
The results in this section were obtained using a single-socket server with an 8-core Intel(R)Xeon(R) CPU E5-2630 v3 CPU, @2.40GHz, 2 threads per core, 64 GiB RAM, 2 NVIDIA GTX 1080 TI GPUs, running Ubuntu 16.04. 
We use NODE commit \texttt{3bae6a8a63f0205683270b6d566d9cfa659403e4} and PyTorch 1.4.0.

\textbf{Hyper-parameter optimization method.} 
To tune the hyper-parameters of NODE and \name{}, we used the optimization method described in Algorithm~\ref{alg:sh} with $n_0=1000$, $\eta=4$ and $r_{min}=1/20$. 
We tuned \name{} on the CPU using 16 single-threaded processes in parallel (\texttt{num\_cores=16}). 
NODE was tuned sequentially, one hyper-parameter configuration at a time, using both available GPUs. 
It was necessary to both GPUs since NODE crashed with out-of-memory errors when using only one.

In this benchmark, we used 2x2 nested cross-validation. 
The SH method was performed independently for each of the 2 outer folds. 
For each outer fold, we performed a cross-validated variant of Algorithm \ref{alg:sh} in which Step \ref{step:train} was performed across the 2 inner folds. 
Specifically, each configuration was trained and evaluated for every inner fold, and the validation loss used to rank the configurations was given as the mean across the 2 inner folds. 
The training and evaluation loss used in this benchmark was the root mean squared error (RMSE). 

\textbf{Hyper-parameter search space.} 
Tables~\ref{tab:hp-mix-node} and~\ref{tab:hp-node-node} show the hyper-parameter ranges used in this benchmark. 
NODE's layer dimension is computed as $\texttt{layer\_dim} = \lceil \texttt{total\_trees} / \texttt{num\_layers} \rceil$. 
Other NODE parameter settings: \texttt{nus=(0.7,1.0}, \texttt{betas=(0.95, 0.998)}, \texttt{optimizer=QHAdam}, \texttt{epochs=100}, and \texttt{batch\_size=min(int(dataset.shape[0]/2), 512)}.

\begin{table}[h!]
    \parbox[t]{.45\linewidth}{
        \centering
        \caption{\name{} hyper-parameter ranges.}
        \label{tab:hp-mix-node}
        \begin{tabular}{lccc}
            \hline
            \textbf{Hyper-parameter} & \multicolumn{1}{l}{\textbf{Min}} & \multicolumn{1}{l}{\textbf{Max}} & \multicolumn{1}{l}{\textbf{Scale}} \\ \hline
                num\_round               & 64           & 2048         & Linear         \\ 
                min\_max\_depth          & 1            & 8            & Linear         \\ 
                max\_max\_depth          & 1            & 8            & Linear         \\ 
                learning\_rate           & -3           & 0            & Log10          \\ 
                subsample                & 0.5          & 1.0          & Linear         \\ 
                colsample                & 0.5          & 1.0          & Linear         \\ 
                lambda\_l2               & -2           & -2           & Log10          \\ 
                tree\_probability        & 0.9          & 1.0          & Linear         \\ 
                fit\_intercept           & 0 (False)    & 1 (True)     & Linear         \\ 
                alpha                    & -6           & 3            & Log10          \\ 
                gamma                    & -3           & 3            & Log10          \\ 
                n\_components            & 1            & 100          & Linear         \\ \hline
                hist\_nbins              & \multicolumn{3}{c}{256}                     \\ \hline
        \end{tabular}
    }
    \hfill
    \parbox[t]{.45\linewidth}{
        \centering
        \caption{NODE hyper-parameter ranges.}
        \label{tab:hp-node-node}
        \begin{tabular}{lccc}
            \hline
            \textbf{Hyper-parameter} & \multicolumn{1}{l}{\textbf{Min}} & \multicolumn{1}{l}{\textbf{Max}} & \multicolumn{1}{l}{\textbf{Scale}} \\ \hline
            num\_layers              & 1                                 & 8                                 & Linear                              \\ 
            total\_trees             & 64                                & 2048                              & Linear                              \\ 
            depth                    & 1                                 & 8                                 & Linear                              \\ 
            tree\_dim                & 2                                 & 3                                 & Linear                              \\ \hline
        \end{tabular}
    }
\end{table}

\textbf{Experimental results.} Table~\ref{tab:uci} shows the result of the benchmark. 
We used 6 publicly-available datasets: ailerons~\cite{ailerons}, parkinsons~\cite{parkinsons,parkinsons-paper}, navalT~\cite{navalT,navalT-paper}, elevators~\cite{elevators}, bank8FM~\cite{bank8FM}, and puma32h~\cite{puma32h}.
The table includes information about the datasets' characteristics, as well as the test RMSE (averaged over the 2 outer folds), and total experimental time for both NODE and \name{}. 
The time is reported in hours. 
\name{} achieves a lower test RMSE than NODE on 4 datasets, whereas NODE wins on the remaining 2 datasets. 
In terms of experimental time, \name{} is on average approximately 160 times faster than NODE.

\begin{table}[]
\centering
\caption{NODE vs. \name{} Benchmark.}
\label{tab:uci}
\resizebox{\textwidth}{!}{%
\begin{tabular}{@{}lccccccc@{}}
	\hline
                                                                                            & \multicolumn{1}{l}{} & \multicolumn{1}{l}{} & \multicolumn{2}{c}{\textbf{RMSE (Test)}}     & \multicolumn{2}{c}{\textbf{Time (hours)}} & \multicolumn{1}{l}{}   \\
\textbf{Name}                                                                               & \textbf{Rows}        & \textbf{Features}    & \textbf{NODE}     & \textbf{\name{}}  & \textbf{NODE}      & \textbf{\name{}}     & \textbf{Speed-up} \\ \hline
ailerons                                                                    & 13750                & 40                   & 0.000204          & \textbf{0.000157} & 38.25            & 0.34               & 112.5                  \\
parkinsons & 5875                 & 19                   & 0.001718          & \textbf{0.000868} & 22.30            & 0.15               & 150                  \\
navalT                                                           & 11934                & 16                   & 0.006941          & \textbf{0.000631} & 55.19            & 0.20               & 275.9                 \\
elevators                                                                  & 16599                & 18                   & 0.005099          & \textbf{0.002074} & 32.83            & 0.24               & 136.7                  \\
bank8FM                                                                      & 8192                 & 8                    & \textbf{0.028717} & 0.031298          & 27.45            & 0.11               & 249.5                 \\
puma32h                                                                      & 8192                 & 32                   & \textbf{0.006424} & 0.007629          & 23.70            & 0.39               & 60.7                   \\ \hline
\end{tabular}%
}
\end{table}

\section{Reformulation of Algorithm \ref{alg:hnbm} as Coordinate Descent}
\label{app:coord}

The definition of the $\mathcal{F}$ given in \eqref{eq:domain} dictates that any function $f\in\mathcal{F}$ can be expressed a weighted sum over functions belong to the base hypothesis class $\mathcal{H}$.
Furthermore, by Assumption \ref{assumption:norm}, every function in $\mathcal{H}$ can be expressed as a scalar multiplied by one of the functions belonging to finite set $\mathcal{\bar{H}}$.
Thus, every $f\in\mathcal{F}$ has an equivalent representation as a weighted sum over the functions $b_j\in\mathcal{\bar{H}}$:
\begin{equation*}
	f(x_i) = \sum_{j=1}^{|\mathcal{\bar{H}}|}\beta_j b_j(x_i),
\end{equation*}
where $\beta\in\mathbb{R}^{|\mathcal{\bar{H}}|}$ and typically the vast majority of the coefficients $\beta_j$ are zero.
Next, we introduce the matrix $B\in\mathbb{R}^{n\times|\mathcal{\bar{H}}|}$, with entries given by $B_{i,j} = b_j(x_i)$.
Given this definition, a given function $f\in\mathcal{F}$ evaluated at $x_i$ can be expressed:
\begin{equation*}
	f(x_i) = \sum_{j=1}^{|\mathcal{\bar{H}}|}\beta_j B_{i,j} = B_i\beta,
\end{equation*}
where $B_i\in\mathbb{R}^{1\times|\mathcal{\bar{H}}|}$ denotes the $i$-th row of $B$.
Thus minimization \eqref{eq:obj} over domain \eqref{eq:domain} is equivalent to minimizing the following objective function over $\beta\in\mathbb{R}^{|\mathcal{\bar{H}}|}$:
\begin{equation*}
	L(\beta) = \sum_{i=1}^n l(y_i, B_i\beta)
\end{equation*}
The optimal coordinate to update at the $m$-th iteration, given randomly chosen subclass index $u_m$, is given by:
\begin{align}
	j_m &= \argmin_{j\in I(u_m)}\left[\min_{\sigma\in\mathbb{R}} L(\beta^{m-1} + \sigma e_j) \right]\nonumber \\
	&= \argmin_{j\in I(u_m)}\left[\min_{\sigma\in\mathbb{R}} \sum_{i=1}^n l(y_i, B_i\beta^{m-1} + \sigma B_{i,j})\right] \nonumber \\
	&\approx \argmin_{j\in I(u_m)}\left[\min_{\sigma\in\mathbb{R}} \sum_{i=1}^n \left(l(y_i, B_i\beta^{m-1}) + g_i \sigma B_{i,j} + \frac{h_i}{2}\sigma^2 B_{i,j}^2 \right)\right]\nonumber\\
	&=  \argmin_{j\in I(u_m)}\left[\min_{\sigma\in\mathbb{R}}\sum_{i=1}^n h_i \left(-\frac{g_i}{h_i} - \sigma B_{i,j}\right)^2 \right],\label{eq:mini_coord}
\end{align}
where the approximation is obtained by taking the second-order Taylor expansion of $l(y_i, B_i\beta^{m-1} + \sigma B_{i,j})$ around $l(y_i, B_i\beta^{m-1})$,  
with expansion coefficients given by $g_i=l'(y_i, B_i\beta^{m-1})$ and $h_i=l''(y_i, B_i\beta^{m-1})$.
Note that this optimization problem is directly equivalent to \eqref{eq:regression} in the original formulation of Algorithm \ref{alg:hnbm}.
For a fixed coordinate $j$, the inner minimization over $\sigma$ has a closed-form solution:
\begin{equation}
	\sigma^*_j = -\frac{\sum_i g_i B_{i,j}}{\sum_i h_i B_{i,j}^2} = -\frac{\nabla_j L(\beta^{m-1})}{\nabla_j^2 L(\beta^{m-1})},\label{eq:sigma_explicit}
\end{equation}
where we have used two identities that link the first and second-order derivatives of $L(\beta)$ to the coefficients $g_i$ and $h_i$ as follows:
\begin{align}
	\nabla_j L(\beta^{m-1}) &= \frac{\partial}{\partial\beta^{m-1}_j} \left(\sum_{i=1}^n l(y_i, B_i\beta^{m-1})\right) = \sum_{i=1}^n g_i B_{i,j} \label{eq:nabla1} \\
	\nabla_j^2 L(\beta^{m-1}) &= \frac{\partial^2}{\partial(\beta^{m-1}_j)^2} \left(\sum_{i=1}^n l(y_i, B_i\beta^{m-1})\right) = \sum_{i=1}^n h_i B_{i,j}^2 \label{eq:nabla2}.
\end{align}
Now, by plugging \eqref{eq:sigma_explicit} into \eqref{eq:mini_coord} we have:
\begin{align}
j_m &= \argmin_{j\in I(u_m)}\left[\sum_{i=1}^n h_i \left(-\frac{g_i}{h_i} + \frac{\nabla_j L(\beta^{m-1})}{\nabla_j^2 L(\beta^{m-1})} B_{i,j}\right)^2\right] \nonumber \\
&= \argmin_{j\in I(u_m)}\left[ -2\frac{\nabla_j L(\beta^{m-1})}{\nabla_j^2 L(\beta^{m-1})}\sum_{i=1}^n g_i B_{i,j} + \left(\frac{\nabla_j L(\beta^{m-1})}{\nabla_j^2 L(\beta^{m-1})}\right)^2\sum_{i=1}^n h_i B_{i,j}^2 \right]\nonumber\\
&= \argmin_{j\in I(u_m)}\left[-\frac{\left(\nabla_j L(\beta^{m-1})\right)^2}{\nabla_j^2 L(\beta^{m-1})}\right] 
= \argmax_{j\in I(u_m)}\left[\left|\frac{\nabla_j L(\beta^{m-1})}{\sqrt{\nabla_j^2 L(\beta^{m-1})}}\right|\right]\nonumber,
\end{align}
where in the third equality we have again used \eqref{eq:nabla1} and \eqref{eq:nabla2}.

\section{Proof of Lemma \ref{lemma:exp}}
\label{app:lemma}
This proof is analogous to Proposition 4.3 in \cite{lu2018randomized}, adapted to use the norm induced by $\Phi$, as well as the second-derivative information. 
From the statement of the Lemma, we have the following definition of $\Gamma(\beta)$ for $j\in\left[|\mathcal{\bar{H}}|\right]$:
\begin{equation*}
	\Gamma_j(\beta) = \frac{\nabla_{j}L(\beta)}{\sqrt{\nabla_{j}^2L(\beta)}}
\end{equation*}
Now, given the definition of $j_m$ in \eqref{eq:update_coord}, we have:
\begin{equation*}
	\mathbb{E}_m\left[ \Gamma_{j_m}(\beta^{m-1})^2 \right] = \sum_{k=1}^K \phi_k \max_{j\in I(k)}\Gamma_j(\beta^{m-1})^2 = \sum_{j=1}^{|\mathcal{\bar{H}}|} \lambda_j \Gamma_j(\beta^{m-1})^2,
\end{equation*}
where $\lambda_j$ is defined as follows:
\begin{equation*}
	\lambda_j = \left.
\begin{cases}
    \phi_1, & \text{if } j = \argmax_{j\in I(1)}\Gamma_j(\beta^{m-1})^2 \\
	\phi_2, & \text{if } j = \argmax_{j\in I(2)}\Gamma_j(\beta^{m-1})^2 \\
	\vdots \\
	\phi_K, & \text{if } j = \argmax_{j\in I(K)}\Gamma_j(\beta^{m-1})^2 \\
    0, & \text{otherwise}.
 \end{cases}\right.
\end{equation*}
Now, by noting that $\sum_{j}\lambda_j = 1$ and $\lambda_j\geq0$ and applying the Cauchy-Schwarz inequality:
\begin{align*}
	\mathbb{E}_m\left[ \Gamma_{j_m}(\beta^{m-1})^2 \right] &= \left(\sum_j\lambda_j\right)\left(\sum_j \lambda_j\Gamma_j(\beta^{m-1})^2\right) \nonumber \\
	&\geq \left(\sum_j\lambda_j|\Gamma_j(\beta^{m-1})|\right)^2 = ||\Gamma(\beta^{m-1})||_\Phi^2,
\end{align*}
where the last equality uses the definition of the $\Phi$-norm from Definition \ref{defn:mca}. \qed

\section{Proof of Theorem \ref{thm:converge}}
\label{app:theorem}

From the update rule \eqref{eq:update} and equations \eqref{eq:update_magnitude} and \eqref{eq:update_coord} we have:
\begin{align}
	L(\beta^{m}) &= \sum_{i=1}^n l\left(y_i, B_i\beta^{m-1} - \epsilon\left(\frac{\nabla_{j_m}L(\beta^{m-1})}{\nabla_{j_m}^2L(\beta^{m-1})}\right)B_{i,j_m}\right)\nonumber\\
	&\leq \sum_{i=1}^n l(y_i, B_i\beta^{m-1}) - \epsilon\left(\frac{\nabla_{j_m}L(\beta^{m-1})}{\nabla_{j_m}^2L(\beta^{m-1})}\right)B_{i,j_m}g_i \nonumber \\
	&+ \frac{\epsilon^2}{2}\left(\frac{\nabla_{j_m}L(\beta^{m-1})}{\nabla_{j_m}^2L(\beta^{m-1})}\right)^2 B_{i,j_m}^2 l''(y_i, z_i),\label{eq:mvt}
\end{align}
where the existence of the sequence $z_i$ are guaranteed by the Mean Value Theorem. 
Now, applying Assumption \ref{assumption:convex} and Assumption \ref{assumption:lipschitz} we have for all $i\in[n]$:
\begin{equation}
	\frac{l''(y_i, z_i)}{l''(y_i, B_i\beta^{m-1})} \leq \frac{S}{\mu} \implies l''(y_i, z_i) \leq \frac{S}{\mu} h_i,
\end{equation}
where we recall that $l''(y, B_i\beta^{m-1})=h_i$. 
Plugging into \eqref{eq:mvt} we have:
\begin{align}
	L(\beta^{m}) &\leq L(\beta^{m-1}) - \epsilon \left(\frac{\nabla_{j_m}L(\beta^{m-1})}{\nabla_{j_m}^2L(\beta^{m-1})}\right) \sum_{i=1}^n B_{i,j_m}g_i + \frac{\epsilon^2}{2}\frac{S}{\mu} \left(\frac{\nabla_{j_m}L(\beta^{m-1})}{\nabla_{j_m}^2L(\beta^{m-1})}\right)^2 \sum_{i=1}^n B_{i,j_m}^2 h_i \nonumber \\
	&= L(\beta^{m-1}) - \Gamma_{j_m}(\beta^{m-1})^2 \left(\epsilon - \frac{\epsilon^2}{2}\frac{S}{\mu}\right)
	= L(\beta^{m-1}) - \frac{\mu}{2S}\Gamma_{j_m}(\beta^{m-1})^2,\label{eq:upper1}
\end{align}
where $\Gamma_j(\beta)$ is defined as in Lemma \ref{lemma:exp} and in the final step we have set the learning rate to be $\epsilon=\frac{\mu}{S}$. 

Now we take the expectation of both sides of \eqref{eq:upper1} with respect to the $m$-th iteration to attain:
\begin{align}
	\mathbb{E}_m\left[L(\beta^{m})\right] &\leq L(\beta^{m-1}) - \frac{\mu}{2S}\mathbb{E}_m\left[\Gamma_{j_m}(\beta^{m-1})^2\right]\nonumber\\
	&\leq L(\beta^{m-1}) - \frac{\mu}{2S} \left\|\Gamma(\beta^{m-1})\right\|_{\Phi}^2 \nonumber \\
	& = L(\beta^{m-1}) - \frac{\mu}{2S}\left(\sum_{k=1}^K\phi_k\max_{j\in I(k)}\left| \frac{\nabla_{j}L(\beta^{m-1})}{\sqrt{\sum_{i=1}^n h_i B_{i,j}^2}} \right|\right)^2 \nonumber \\
	&\leq L(\beta^{m-1}) - \frac{\mu}{2S^2}\left(\sum_{k=1}^K\phi_k\max_{j\in I(k)}\left| \frac{\nabla_{j}L(\beta^{m-1})}{\sqrt{\sum_{i=1}^n B_{i,j}^2}} \right|\right)^2\nonumber \\
	&= L(\beta^{m-1}) - \frac{\mu}{2S^2}\left(\sum_{k=1}^K\phi_k\max_{j\in I(k)}\left| \nabla_{j}L(\beta^{m-1}) \right|\right)^2\nonumber\\
	&= L(\beta^{m-1}) - \frac{\mu}{2S^2} ||\nabla L(\beta^{m-1})||_\Phi^2\label{eq:app-upper}
\end{align}
where the second inequality follows from Lemma \ref{lemma:exp}, the third inequality follows from Assumption \ref{assumption:lipschitz}, and the penultimate equality follows due to Assumption \ref{assumption:norm}.

We then apply directly apply Proposition 4.4 and 4.5 from \cite{lu2018randomized} (which in turn rely on Assumption \ref{assumption:convex}) to obtain the following lower bound:
\begin{equation}
	\left\|\nabla L(\beta^{m-1})\right\|_{\Phi}^2 \geq 2\mu\Theta^2\left(L(\beta^{m-1})-L(\beta^*)\right),
	\label{app:lower}
\end{equation}
where $\beta^*$ is the vector that minimizes $L(\beta)$. 
Now subtracting $L(\beta^*)$ from both sides of \eqref{eq:app-upper} and applying \eqref{app:lower} we have:
\begin{align*}
	\mathbb{E}_m\left[L(\beta^{m}) - L(\beta^*)\right] &\leq L(\beta^{m-1}) - L(\beta^*) - \frac{\mu}{2S^2} ||\nabla L(\beta^{m-1})||_\Phi^2 \nonumber \\
	&\leq \left(L(\beta^{m-1}) - L(\beta^*)\right) \frac{\mu^2}{S^2}\Theta^2
\end{align*}
The proof is furnished by following a telescopic argument. \qed

\end{document}